\documentclass{article}

 \usepackage[preprint, nonatbib]{neurips_2026}

\usepackage[numbers]{natbib}
\usepackage[utf8]{inputenc} 
\usepackage[T1]{fontenc}    
\usepackage{url}            
\usepackage[colorlinks=true, linkcolor=blue, citecolor=blue]{hyperref}     
\usepackage{booktabs}       
\usepackage{amsfonts}       
\usepackage{nicefrac}       
\usepackage{microtype}      
\usepackage{xcolor}         

\usepackage{graphicx}
\usepackage{amsthm,amssymb}
\usepackage{amsmath}
\usepackage{graphicx}
\usepackage{subfigure}
\usepackage{algorithm}
\usepackage{algorithmic}

\newtheorem{definition}{\indent Definition}

\newtheorem{theorem}{\indent Theorem}

\theoremstyle{remark}
\newtheorem{remark}{Remark}

\title{Accelerating Optimization and Machine Learning through Decentralization}

\author{%
  Ziqin Chen$^\dagger$\\
  Clemson University\\
  \texttt{ziqinc@g.clemson.edu} \\
  \And
  Zuang Wang$^\dagger$ \\
  Clemson University \\
  \texttt{zuangw@clemson.edu} \\
  \AND
  Yongqiang Wang\thanks{Corresponding author} \\
  Clemson University \\
  \texttt{yongqiw@clemson.edu} \\
}

\begin{document}

\maketitle
\def\thefootnote{$\dagger$}\footnotetext{These authors contributed equally to this work}

\begin{abstract}
Decentralized optimization enables multiple devices to learn a global machine 
learning model while each individual device only has access to its local dataset. By
avoiding the need for training data to leave individual users’ devices, it enhances 
privacy and scalability compared to conventional centralized learning, where all 
data has to be aggregated to a central server. However, decentralized optimization has traditionally been viewed as a necessary compromise, used only when centralized processing is impractical due to communication constraints or data privacy concerns. In this study, we show that decentralization can paradoxically accelerate convergence, outperforming centralized methods in the number of iterations needed to reach optimal solutions. Through examples in logistic regression and neural network training, we demonstrate that distributing data and computation across multiple agents can lead to faster learning than centralized approaches—even when each iteration 
is assumed to take the same amount of time, whether performed centrally on the 
full dataset or decentrally on local subsets. This finding challenges longstanding assumptions and reveals decentralization as a strategic advantage, offering new opportunities for more efficient optimization and machine learning.
\end{abstract}

\section{Introduction}
We focus on the fundamental task of mathematical optimization—the process of iteratively refining a solution to minimize a loss function with respect to a given metric. In typical application domains such as machine learning and operations research, the loss is usually defined on a large number of samples or measurements collected from multiple locations \cite{lecun2015deep}. For example, the development of machine learning based disease classifiers needs a large amount of training data to ensure accuracy and minimize bias \cite{warnat2021swarm,peterson2021using,wang2023scientific}. In sensor network based environment, sensing and climate monitoring, a sufficient amount of measurement data is needed to suppress the noise inherent in low-cost sensors \cite{iyer2022wind,beucler2024climate}. Although a natural approach to  learning on such data is to aggregate everything on a central server and train a model directly, limitations in computational and storage capacity on individual devices have led to growing interest in decentralized learning methods. These methods are uniquely suited to handle the rapid growth in data volume and model complexity in modern training tasks \cite{jiang2017collaborative,koloskovadecentralized}. Moreover, training data may sometimes involve sensitive information, such as medical records, and legislation may prohibit aggregating data from distributed sources like hospitals \cite{warnat2021swarm}. In such cases, decentralized learning becomes the only viable option. 

In the past decade, substantial progress has been made towards developing efficient decentralized machine learning algorithms, both in the fully decentralized server-free case (which is also called consensus optimization \cite{nedic2010constrained,shi2015extra,koloskovadecentralized}) and the parameter-server assisted case (which is also called federated optimization \cite{mcmahan2017communication,kairouz2021advances} or cloud-based optimization \cite{nedic2018distributed}). However, it is generally believed that such decentralized learning cannot match the performance of its centralized counterpart, where a powerful server performs computation directly on aggregated data. (It is important to note that the commonly cited ``linear speedup'' in distributed optimization with an increasing number of computing devices, as shown by \cite{lian2017can} and \cite{yu2019linear}, assumes that each device has fixed computing power, so that adding more devices effectively increases total computational capacity. This assumption is orthogonal to our setting, where the central server has computing power equivalent to the combined capacity of all distributed devices.) In this paper, we challenge this widely held belief and demonstrate—both theoretically and empirically—that decentralization can accelerate optimization by reducing the number of iterations required compared to centralized methods.

 To establish this claim on firm theoretical grounds, we leverage and extend the performance estimation problem (PEP) framework \cite{taylor2017smooth}. We stress that our {\bf PEP-based results are formally valid and mathematically rigorous}, as they characterize the exact worst-case performance of optimization algorithms over {\bf a prescribed function class} \cite{taylor2017smooth}. This stands in contrast to conventional analyses, which typically rely on analytical upper bounds that may be loose, overly conservative, or even misleading \cite{meunier2025several}. Moreover, it is fundamentally distinct from numerical simulations on specific function instances, as it provides guarantees that hold uniformly across {\bf the entire function class}.

\section{Problem statement and context}
\subsection{Preliminaries}
\begin{definition}[Strong Convexity]
A function \( f: \mathbb{R}^d \to \mathbb{R} \) is strongly convex if there exists a constant \( \mu > 0 \) such that for all \( x, y \in \mathbb{R}^d \), the following inequality holds:
$
f(y) \geq f(x) + \nabla f(x)^\top (y - x) + \frac{\mu}{2} \|y - x\|^2.
$
\end{definition}

\begin{definition}[Lipschitz Smoothness]
A function \( f: \mathbb{R}^d \to \mathbb{R} \) is Lipschitz smooth with constant \( L > 0 \) if for all \( x, y \in \mathbb{R}^d \), the following inequality holds:
$
\|\nabla f(x) - \nabla f(y)\| \leq L \|x - y\|.
$
\end{definition}
\subsection{Optimization problem}
Let $\mathcal{D}=\{\xi_1,\,\xi_2,\,\cdots,\xi_{|\mathcal{D}|}\}$ represent a set of data samples with $\xi_i$ denoting the $i$th data sample and $|\mathcal{D}|$ denoting the size of the dataset. Then, most optimization or machine learning tasks on the dataset $\mathcal{D}$ can be mathematically formulated as the following problem:
\begin{equation}\label{optimization_problem}
\min_{x \in \mathcal{X}} \; f(x) := \frac{1}{|\mathcal{D}|} \sum_{i=1}^{|\mathcal{D}|} \ell(x,\xi_i),
\end{equation}

where $\mathcal{X}$ denotes the set of all models/solutions that are of interest, $\ell$ denotes the loss function, and $\ell(x,\xi_i)$ represents the loss of a model $x$ evaluated on data sample $\xi_i$. For example, in the popular empirical risk minimization problem \cite{vapnik2006estimation}, $\xi_i$ is of the form of $(a_i,b_i)$ with $a_i$ denoting the input (feature) and $b_i$ denoting the output (label). A model $x$ predicts an output $x(a_i)$ and the precision of this prediction is measured by the loss $\ell(x, \xi_i)=\ell(x,(a_i,b_i))$. Commonly used models $x$ include linear/logistic regression models and neural networks. And commonly used examples of the loss function include cross-entropy functions, Kullback–Leibler (KL) divergence, regression functions, and many others \cite{shalev2014understanding}.

\subsection{Optimization algorithms}
To solve   problem~\eqref{optimization_problem}, many algorithms have been proposed, including gradient descent methods (e.g., \cite{bertsekas2000gradient,nesterov2013introductory}), distributed ADMM (e.g., \cite{boyd2011distributed}), Newton’s methods (e.g., \cite{dennis1977quasi}), and many others. 
We focus on gradient methods, as they—and their stochastic variants—are among the most widely used techniques and are widely regarded as the primary workhorse and cornerstone for training modern neural networks, as shown in \cite{2019stochastic}. 
The standard gradient descent method takes the following form:
\begin{equation}\label{eq:gradient_descent}
x^{k+1}=x^k-\alpha g^k,
\end{equation}
where $x^k$ denotes the model value in the $k$th iteration, $\alpha$ is   the step size, and $g^k$ denotes the gradient evaluated at $x^k$. In machine learning applications, usually a stochastic estimate of the gradient (based on mini-batch sampling of available data) is employed. When the objective function is convex and $L$-smooth, i.e., there is a constant $L$ such that $\|g(x)-g(x')\|\leq L\|x-x'\|$ holds for all $x,x'\in\mathcal{X}$, it is well known that a stepsize $\frac{1}{L}$ gives guaranteed convergence to an optimal solution. 

Recent years have also witnessed the development of numerous decentralized gradient methods that enable multiple devices to learn a common optimal solution while each device can only have access to a subset of the total dataset (see, e.g., \cite{nedic2009distributed,shi2015extra,di2015distributed,nedic2017achieving,qu2017harnessing,xu2017convergence,karimireddy2020scaffold}). In these studies, the objective function is usually assumed to be of a form $f(x)=\sum_{i=1}^Nf_i(x)$ with $N$ denoting the number of participating devices and $f_i(x)$ denoting the loss of the subset of data only accessible to device $i$. There are two commonly studied decentralized gradient methods: server-assisted decentralized gradient methods (see, e.g., \cite{mcmahan2017communication,karimireddy2020scaffold,xiang2024efficient}) and server-free decentralized gradient methods (see, e.g., \cite{nedic2009distributed,allouahprivacy}). In the former method, a central server is responsible for updating the optimization variable, while each participating device 
$i$ is only tasked with computing gradients $g_i^k$ 
based on its local function 
$f_i$
and the variable $x$ 
received from the server. In this case, the update usually takes the form $x^{k+1}=x^k-\alpha\frac{\sum_{i=1}^Ng_i^k}{N}$. In the latter, server-free case, each participating agent maintains a local copy of $x$, denoted $x_i$. Each device updates its local copy $x_i$ based on information received from its neighbors and its local gradient $g_i$. Assuming that device $j$ shares $w_{ij}(x_j^k-\alpha g_j^k)$ at the $k$th iteration to its neighbors with $w_{ij}\geq 0$ denoting the influence of device $j$ on device $i$, the update of every device $i$ takes the form $x_i^{k+1}=\sum_{j=1}^Nw_{ij}(x_j^k-\alpha g_j^k)$. It is worth noting that when the coupling coefficient $w_{ij}$ is set to $\frac{1}{N}$ for all $1\leq i,j\leq N$, the dynamics of the server-free decentralized optimization become equivalent to those of the server-assisted case when the initial values of all devices (i.e., $x_i^0$ for all $i$) are set as the same value. 

It is worth noting that although those distributed versions of gradient methods allow~\eqref{optimization_problem} to be solved in a distributed manner with each participating device only having a subset of the total data, they generally require more iterations to reach a given accuracy level (or, at best, perform on par) compared to their centralized counterparts, in which a single device performs gradient descent on the full dataset.

\begin{algorithm}
\caption{}\label{algorithm}
\begin{algorithmic}[1]
\STATE \textbf{Initialization:} \(x^0\), step size $\alpha_i=\frac{1}{L_i}$ for device $i$, switching threshold $\epsilon>0$, switching state $\mathcal{S}=F$.
\FOR{\(k=1,\cdots,K\)}
\IF{$\|\sum_{i=1}^N\alpha_ig_i^k\|\leq \epsilon$ and $\mathcal{S}=F$}
\STATE Set $\mathcal{S}=T$ and reset $\alpha_i=\frac{1}{L}$ with $L=\frac{\sum_{i=1}^NL_i}{N}$;
\ENDIF
\STATE Each device receives $x^{k-1}$ from the server, computes \(g_i^{k-1}\), and then sends it back to the server;
\STATE The server updates 
\[x^{k}=x^{k-1}-\frac{\sum_{i=1}^N\alpha_i g_i^{k-1}}{N};\]
\ENDFOR
\end{algorithmic}
\end{algorithm}

\section{Decentralization approach and theoretical performance evaluation}\label{sec:approach and PEP}

\subsection{Decentralization approach}
When data are generated across different devices and locations, a natural approach to solving optimization problem~\eqref{optimization_problem} is to aggregate all data on a central server and then apply a centralized optimization method, such as the centralized gradient method. In fact, since data collected from different locations often exhibit heterogeneous distributions—for example, facial images captured by cameras tend to reflect the demographics of each location, and images of kangaroos are typically collected only from cameras in Australia or zoos \cite{hsieh2020non}—this heterogeneity poses a significant challenge to decentralized learning algorithms \cite{hsieh2020non}. Consequently, it is widely believed that aggregating all datasets and performing learning centrally is a more desirable approach. However, we argue that when local devices possess computing capabilities, retaining data on the devices and solving the problem in~\eqref{optimization_problem} via decentralized optimization can lead to faster convergence—especially in cases where the data distribution is skewed across devices. More specifically, we demonstrate that when data across different devices follow heterogeneous distributions, the local geometric properties of these datasets can be leveraged to accelerate optimization by reducing the number of iterations needed for solving~\eqref{optimization_problem}. 

Our key idea for exploiting the local geometry of individual objective functions is to employ a server-assisted decentralized optimization algorithm, where each device uses a local step size tailored to the smoothness constant of its own dataset. Specifically, represent the dataset on device $i$ as $\mathcal{D}_i$. Then device $i$'s local loss function is given by
$f_i(x) := \frac{1}{|\mathcal{D}_i|} \sum_{j=1}^{|\mathcal{D}_{i}|} \ell(x,\xi_j)$ and we can obtain its local smoothness constant $L_i$ and set its local step size as $\frac{1}{L_i}$. Note that the definition of the loss function $f(\cdot)$ in~\eqref{optimization_problem}, which is the average over all data samples, implies $L=\frac{\sum_{i=1}^N|\mathcal{D}_i|L_i}{\sum_{i=1}^N|\mathcal{D}_i|}$ (which reduces to $L=\frac{\sum_{i=1}^NL_i}{N}$ when the sizes of datasets $|\mathcal{D}_i|$ are equal). Of course, with each device $i$ using a heterogeneous step size $\frac{1}{L_i}$, different devices' gradients are weighted unevenly in the update of $x$, which will result in a deviation from the original optimal solution \cite{wang2023decentralized}. To mitigate the resulting optimization errors, we propose switching from heterogeneous step sizes to a universal step size, $\frac{1}{L}$, once convergence plateaus under the heterogeneous step size regime. This strategy can ensure convergence to an exact optimal solution (as confirmed by our theoretical evaluation in Fig.~\ref{fig:PEP}). The detailed algorithm is summarized in Algorithm~\ref{algorithm}. 

It is worth noting that even when the data are already centralized on a server, this approach can still be applied by intentionally partitioning the data in a non-uniform manner—for example, by dividing it according to data labels.

\begin{figure}[htbp]
\begin{center}
\quad\quad\subfigure[]{\label{fig1a}
\hspace{-0.8cm}\includegraphics[width=3.8cm, height=3.8cm]{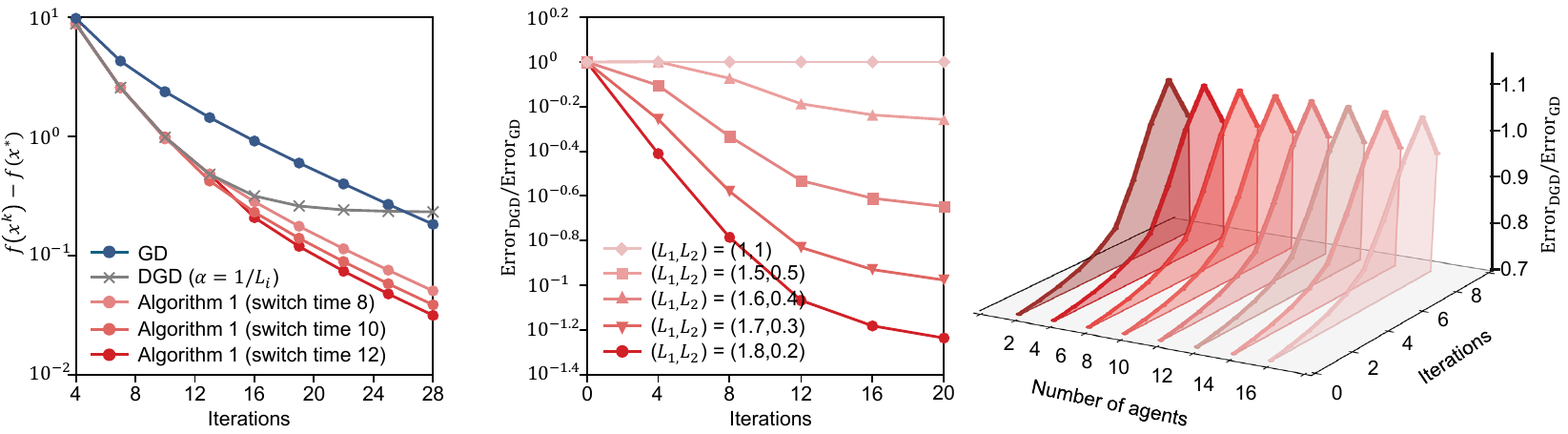}}
\quad\quad~\subfigure[]{\label{fig1b}
\hspace{-0.8cm}\includegraphics[width=3.9cm, height=3.8cm]{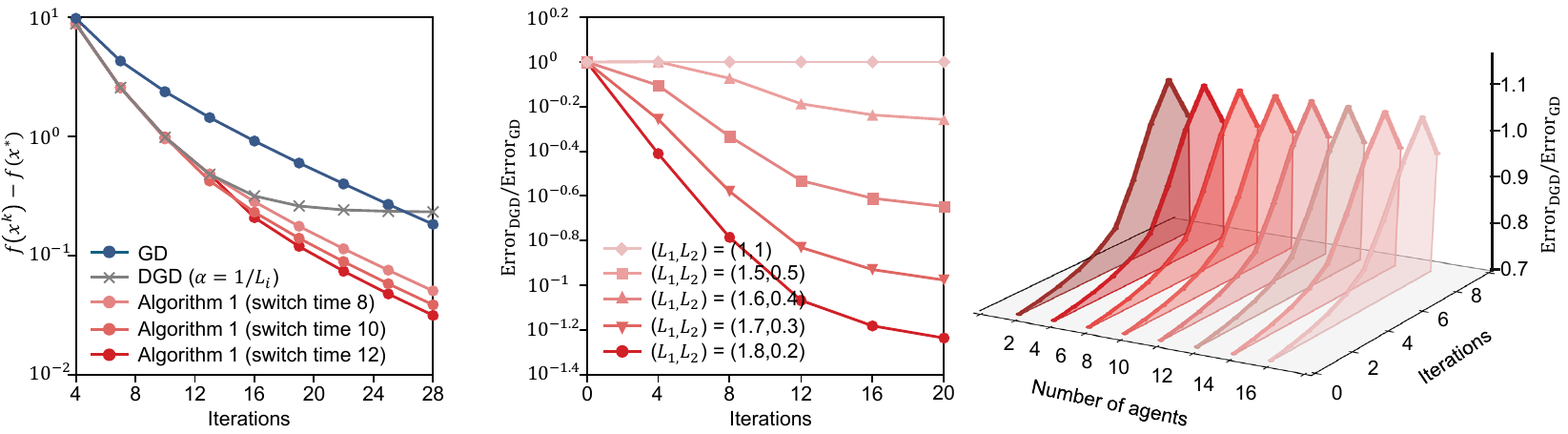}}
\quad\quad~\subfigure[]{\label{fig1c}
\hspace{-0.8cm}\includegraphics[width=5.5cm, height=3.8cm]{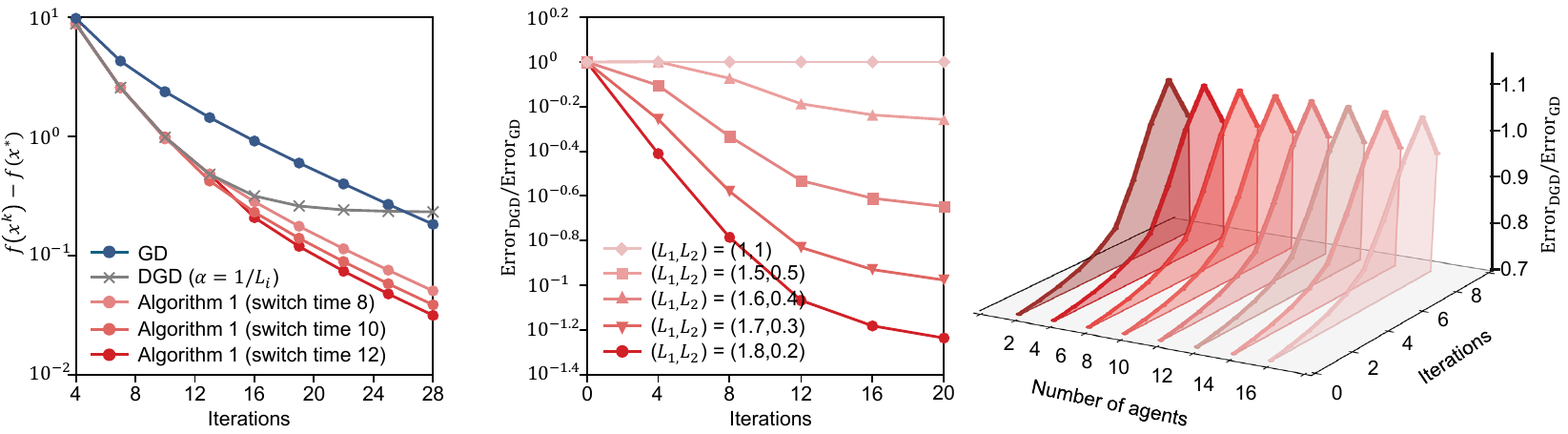}}
\vspace{-0.5em}
\caption{Performance comparison of Algorithm~\ref{algorithm} with its centralized counterpart (GD) using PEP based theoretical analysis. (a) shows the acceleration of Algorithm~\ref{algorithm} (results under three switch time instants are given, represented by three red curves with varying shades) over its centralized counterpart (termed GD and represented by the blue curve). The centralized approach uses a step size \(\frac{1}{L}\) with \(L= \frac{1}{N}\sum_{i=1}^NL_i\). The gray curve represents the performance of the conventional decentralized gradient descent (termed DGD) algorithm with local step sizes, i.e., agent \(i\) using a step size \(\frac{1}{L_i}\), which is subject to a steady-state error. In this subplot, we consider two agents with heterogeneous local smoothness constants \( L_1 = \frac{1}{3} \) and \(L_2=3\). (b) demonstrates that a greater heterogeneity between $L_1$ and $L_2$ leads to more acceleration via decentralization. The y-axis represents the ratio of optimization errors obtained by PEP under Algorithm~\ref{algorithm} and GD. A ratio less than \(1\) indicates Algorithm~\ref{algorithm} gaining acceleration over its centralized counterpart. (c) considers the more than two-agent setting with half of the agents having a smoothness constant of \(\frac{1}{3}\), whereas the other half have a smoothness constant of \(3\). The strongly convex parameter is set to \(\mu=0.1\). }
\vspace{-1em}
\label{fig:PEP}
\end{center}
\end{figure}

\subsection{Theoretical evaluation}\label{sec:PEP results}
Recently, a framework known as the Performance Estimation Problem (PEP) \cite{drori2014performance, taylor2017smooth} was introduced to precisely characterize the theoretical worst-case performance of optimization algorithms. In contrast to traditional asymptotic analyses, which often provide conservative or loose guarantees \cite{meunier2025several}, PEP yields exact performance bounds for algorithms over specified function classes.

To describe the PEP framework, we first define the augmented states as
$
x^k = [x^k_1,\; x^k_2,\; \dots,\; x^k_N] \in \mathbb{R}^{d \times N},
$
where the superscript $k$ belongs to the index set $I_K = \{0, \dots, K\}$. Similarly, we define the augmented local optimum and global optimum as
$
x^\star = [x^\star_1,\; x^\star_2,\; \dots,\; x^\star_N]$ and $ x^\ast = [x_1^\ast,\; x_2^\ast,\; \dots,\; x_N^\ast],
$
respectively (where $x_1^\ast= x_2^\ast= \dots=x_N^\ast$ will be enforced in the PEP formulation). To simplify notation, we extend the index set $I_K$ for augmented states to include these two optimal states, defining the augmented index set as
$I_K^{\star,\ast} = \{0, \dots, K, \star, \ast\}.
$

In a similar manner, we define the augmented gradients and function values as
$$
g^k = [g^k_1,\; g^k_2,\; \dots,\; g^k_N] \in \mathbb{R}^{d \times N}, \quad f^k = [f^k_1,\; f^k_2,\; \dots,\; f^k_N] \in \mathbb{R}^{N}, \quad \text{for } k \in I_K^{\star,\ast}.
$$

We use $\mathcal{F}_{\mu, L}$ to denote the class of functions such that each $f \in \mathcal{F}_{\mu, L}$ is $\mu$-strongly convex and $L$-smooth.

Under these definitions and denoting $\mathbb{S}_{+}$ as the set of all symmetric positive semidefinite matrices, we have
\begin{align*}
&P = [x^0,\; g^0,\; g^1,\; \dots,\; g^K,\; g^{\star},\; g^{\ast},\; x^{\star},\; x^{\ast}]
 \in \mathbb{R}^{d\times [(K+6)N]}, \\
&G = P^{\top}P \in \mathbb{S}^{((K+6)N)\times((K+6)N)}_{+},\;F = [f^0,\; f^1,\; \dots,\; f^K,\; f^{\star},\; f^{\ast}] \in \mathbb{R}^{1\times(K+3)N},
\end{align*}
 which allows us to formulate the PEP as:
\begin{align*}
&\max_{{\substack{
 x^0,\; g^0,\; g^1,\; \dots,\; g^K,\\
 \; g^{\star},\; g^{\ast}, x^{\star},\; x^{\ast}
}} 
} 
 \quad f(\bar{x}^K) - f(x^*)\\
\text{s.t.}\; 
& \{x_{i}^k,\; g_{i}^k,\; f_i^k\},\; k \in I_K^{\star,\ast}\;\text{are interpolated for each local function class}\; \mathcal{F}_{\mu_i, L_i}, \\
& \{x_{i}^k\}_{i\in [N],\; k\in I_K},\; \bar{x}^K\text{are generated recursively by algorithm~\ref{algorithm} and}\;x_1^k = x_i^k, \forall i\in[N], k\in I_K,\\
& \frac{1}{N}\sum_{i=1}^N g_{i}^{\ast} = \frac{1}{N}\sum_{i=1}^N \nabla f_i(x^{\ast})= 0,\;\text{such that}\;x^*\;\text{is the global optimum and}\;x^*_1=x^*_i,\forall i\in [N],\\
& g^{\star}_{i} = \nabla f_i(x^{\star}_{i})=0,\;\forall i \in [N],\;\text{such that}\;x^{\star}_{i}\;\text{is the local optimum for}\;f_i,\\
& \|x_i^0 - x^{\ast}\|^2 \le R_0^2,\;\forall i\in [N],\quad \|x_{i}^{\star} - x^{\ast}\|^2 \le R_{\ast}^2,\;\forall i \in [N].
\end{align*}

The last two constraints are standard in the PEP literature. The PEP formulation above can be transformed into an {\it equivalent} semidefinite programming (SDP) problem, which is convex and can therefore be solved efficiently. As long as the condition $\dim(G) \leq d$ holds, the {\it exact} worst-case optimization error after $K$ iterations of Algorithm \ref{algorithm} is equal to the optimal value of this SDP.

This equivalence implies that for the given function class, there exists a specific function on which Algorithm \ref{algorithm} achieves precisely the worst-case error predicted by the PEP. Moreover, no function in the class can lead to a larger error under the same algorithm. A more detailed derivation of this formulation, along with its final SDP representation, is provided in the \textit{Appendix}.

To efficiently solve the resulting SDPs arising from the PEP formulation, we utilize the JuMP modeling language in Julia. JuMP \cite{lubin2023jump} offers a flexible and high-level interface for formulating convex optimization problems, including SDPs, and facilitates seamless integration with state-of-the-art solvers. For this work, we employ the MOSEK solver \cite{mosek}, which is well-suited for large-scale and structured convex optimization problems, particularly semidefinite and conic programs. This combination enables us to model and solve the PEP instances accurately and efficiently. All experiments are conducted on a computing platform equipped with dual AMD EPYC 9654 processors (each with 96 cores) and 256 GB of RAM. 

Using this approach, we rigorously compare the performance of the proposed Algorithm~\ref{algorithm} with its centralized counterpart, i.e., the update rule in~\eqref{eq:gradient_descent}. Without loss of generality, we first consider a dataset composed of $N=2$ types of data samples. In this case, we have $f(x)=\frac{1}{2}\sum_{i=1}^2f_i(x)$. We assume that the data distributions are heterogeneous and hence $f_1$ and $f_2$ have different smoothness properties (e.g., $L_1=\frac{1}{3}$ and $L_2=3$). The results are summarized in Fig.~\ref{fig1a}. It is evident that Algorithm~\ref{algorithm} can significantly reduce the number of iterations required to achieve a given accuracy level, compared to its centralized counterpart (denoted as ``GD''). It is worth noting that although we leverage local geometry—captured by the local smoothness constants $L_i$—to accelerate convergence, directly applying this strategy to existing decentralized gradient methods is not viable, as it leads to a steady-state error (see the result represented by the black curve in Fig.~\ref{fig1a}). This highlights the necessity and significance of our proposed design. Moreover, to assess the impact of switching timing in Algorithm~\ref{algorithm}, we present results for three different switching points, represented by the three curves with varying shades of red in Fig.~\ref{fig1a}. In all cases, Algorithm~\ref{algorithm} achieves faster convergence than the centralized counterpart, demonstrating that its performance is not sensitive to the choice of switching timing.

\begin{theorem}(Convergence Acceleration of Algorithm 1)\label{the speed gain theorem}
    Consider the optimization problem defined in (1), where the function $f$ belongs to the class $\mathcal{F}_{\mu, L}$, characterized by the smoothness parameter $L$  and strong convexity parameter $\mu$. Let $\mathcal{A}_1$ denote Algorithm 1 with local step sizes $\frac{1}{L_i}$, and $\mathcal{A}_c$ denote centralized gradient descent with a uniform step size $\frac{1}{L}$. For the performance metric
    $$
    P_{\mathcal{A}} =
    \max_{f\in\mathcal{F}_{\mu,L}}
    \left\{
    \frac{1}{N}\sum_{i=1}^N ||x_{i,\mathcal{A}}^{k} - x^\ast||^2
    \right\},
    $$
    which measures the worst-case optimization error at iteration $k$ over all $f\in\mathcal{F}_{\mu, L}$, we consider the ratio $P_{\mathcal{A}_1}/P_{\mathcal{A}_c}$ as the relative acceleration of Algorithm 1 compared with centralized gradient descent (with values below $1$ indicating acceleration and smaller ratios indicating stronger acceleration). 
    Figure 1(b) gives the exact ratio values under different function classes, and the ratio values being strictly less than 1 for all tested function classes $\mathcal{F}_{\mu, L}$ in the heterogeneous setting confirm that Algorithm 1 with local step sizes achieves a strictly faster convergence rate than centralized gradient descent under all considered function classes.
\end{theorem}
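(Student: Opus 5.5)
The claim is computational in nature: for each tested pair of function classes it asserts $P_{\mathcal{A}_1}/P_{\mathcal{A}_c}<1$, so the argument must certify two exact worst-case values. I would compute each via the PEP reformulation of Section~\ref{sec:PEP results} and then compare them. For both $\mathcal{A}_1$ and $\mathcal{A}_c$ I use the same local classes $\mathcal{F}_{\mu,L_i}$ with $L=\frac1N\sum_i L_i$, so the aggregate $f$ lies in $\mathcal{F}_{\mu,L}$. I also use the same initial condition $\|x_i^0-x^\ast\|^2\le R_0^2$ and the same metric $\frac1N\sum_i\|x_i^k-x^\ast\|^2$.

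\textbf{Step 1: lifting to an SDP.} Both algorithms are linear in $(x^0,g^0,\dots,g^{k-1})$. Algorithm~\ref{algorithm} is linear for a fixed switching index, which I treat as a parameter, as in Fig.~\ref{fig1a}. Hence every iterate, and the objective $\frac1N\sum_i\|x_i^k-x^\ast\|^2$, is a linear function of the Gram matrix $G=P^\top P$. I would then invoke the interpolation theorem for $\mathcal{F}_{\mu_i,L_i}$ of \cite{taylor2017smooth}. It says a finite set $\{(x_i^j,g_i^j,f_i^j)\}$ is interpolable by some $f_i\in\mathcal{F}_{\mu_i,L_i}$ iff, for all pairs $j,l$,
\[
f_i^j\ge f_i^l+\langle g_i^l,x_i^j-x_i^l\rangle+\tfrac{1}{2(1-\mu_i/L_i)}\Bigl(\tfrac1{L_i}\|g_i^j-g_i^l\|^2+\mu_i\|x_i^j-x_i^l\|^2-\tfrac{2\mu_i}{L_i}\langle g_i^j-g_i^l,x_i^j-x_i^l\rangle\Bigr).
\]
Each such inequality is linear in $(G,F)$. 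The optimality constraints $g_i^\star=0$ and $\sum_i g_i^\ast=0$, the consensus constraints, and the radius constraints are likewise linear in $G$. Relaxing the rank condition $\operatorname{rank}G\le d$ gives a convex SDP. Its value equals $P_{\mathcal{A}}$ exactly whenever $d\ge\dim G$: any feasible $G\succeq0$ factors as $P^\top P$ in dimension $d$, and interpolation then produces the actual functions $f_i$.

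\textbf{Step 2: certified bounds.} I would solve both SDPs numerically with MOSEK and turn the numerical solutions into rigorous bounds. For the ratio to be certifiably below $1$, I need an upper bound on $P_{\mathcal{A}_1}$ and a lower bound on $P_{\mathcal{A}_c}$.
\begin{itemize}
\item \emph{Upper bound on $P_{\mathcal{A}_1}$.} Take the dual multipliers $\lambda\ge0$ and the slack matrix $S\succeq0$, round them to rationals, and verify exactly that they are dual feasible. By weak duality the dual objective is a valid upper bound.
\item \emph{Lower bound on $P_{\mathcal{A}_c}$.} Round the primal solution to a feasible $G$. Alternatively, use the classical tight rate of centralized GD with step $1/L$, namely $\max\{(1-\mu/L)^2,0\}^k R_0^2$ per iteration (the quadratic $\frac{\mu}{2}\|x\|^2$ attains it), which gives an explicit lower bound.
\end{itemize}
If the certified upper bound divided by the certified lower bound is below $1$, the strict inequality holds for that function class. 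Repeating this over the grid of $(L_1,L_2)$ shown in Fig.~\ref{fig1b} proves the statement for all considered classes.

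\textbf{Main obstacle.} The hard step is making the numerical SDP values rigorous. Solver tolerances, the rank/dimension requirement, and possible ill-conditioning when $L_1/L_2$ is extreme all threaten exact certification. I would first try rational rounding of the dual certificate followed by an exact LDL$^\top$ check of $S\succeq0$. If the margin $1-\text{ratio}$ is tiny, a perturbation argument (adding a small multiple of the identity to $S$) should absorb the rounding error. The claim rests only on this finite, certified set of tested classes; the argument gives no analytic statement for classes outside the grid.
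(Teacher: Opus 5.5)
Your proposal is correct and uses the same foundation as the paper's proof: the PEP lifting to an SDP over the local classes $\mathcal{F}_{\mu,L_i}$, with the switching index treated as fixed. Where it differs is in how you close the argument. The paper argues that the worst case is finite and hence attained, and invokes exactness of the PEP so that the computed values are the true worst-case errors. It then reads $P_{\mathcal{A}_1}/P_{\mathcal{A}_c}<1$ off Fig.~1(b). It never addresses solver tolerance, and its attainment step needs more than finiteness of the supremum (for example, compactness of the lifted feasible set). You instead observe that a strict ratio bound needs only one-sided certificates:
\begin{itemize}
\item An exactly verified, rounded dual-feasible point bounds $P_{\mathcal{A}_1}$ from above. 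By weak duality this bound holds for the rank-relaxed SDP and hence for every $d$, so you need neither attainment nor $d\ge\dim G$.
\item The denominator is available in closed form. Taking $f_i=\frac{\mu}{2}\|x\|^2\in\mathcal{F}_{\mu,L_i}$ (valid since $\mu\le L_i$) is admissible and attains the standard contraction bound for step $1/L$, giving $P_{\mathcal{A}_c}=(1-\mu/L)^{2k}R_0^2$ exactly.
\end{itemize}
Your route therefore gives a genuinely rigorous computer-assisted proof of the inequality that is robust to numerical error, which the paper's argument does not. What it does not deliver is the paper's assertion that the plotted ratios are exact values. For that you would also need a realizable primal certificate (rank at most $d$) for $\mathcal{A}_1$. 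Both arguments share the same scope limits, which you correctly acknowledge. They cover only fixed switching times, because the threshold-triggered switch of Algorithm~\ref{algorithm} is state-dependent and not linear in $G$. They also cover only the finite grid of tested $(L_1,L_2)$.
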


\begin{proof}
    See Appendix \ref{proof}.
\end{proof}

To examine how the difference between the smoothness constants $L_1$ and $L_2$ influences the acceleration achieved by Algorithm~\ref{algorithm} over its centralized counterpart, we compute the ratio of the optimization error of Algorithm~\ref{algorithm} to that of the centralized method across various values of the smoothness constants. As presented in Theorem 1, a ratio smaller than one clearly indicates that Algorithm~\ref{algorithm} outperforms its centralized counterpart, with smaller ratios corresponding to greater acceleration. Fig.~\ref{fig1b} presents the comparison results when Algorithm~\ref{algorithm} is under different $(L_1, L_2)$ pairs with a fixed sum. Note that for all such pairs, the centralized counterpart has a fixed smoothness constant of $L=\frac{L_1+L_2}{2}=1$. The results indicate that a larger difference between the smoothness constants—which corresponds to greater heterogeneity in the data distributions across the two devices—yields greater acceleration of Algorithm~\ref{algorithm} over its centralized counterpart. In Fig.~\ref{fig1c}, we evaluate the influence of the number of devices on this acceleration. The results clearly show that the acceleration remains essentially unchanged regardless of the number of devices involved.

\begin{figure} [ht]
\begin{center}
\quad\quad\subfigure[]{\label{fig2a}
\includegraphics[width=4cm, height=4cm]{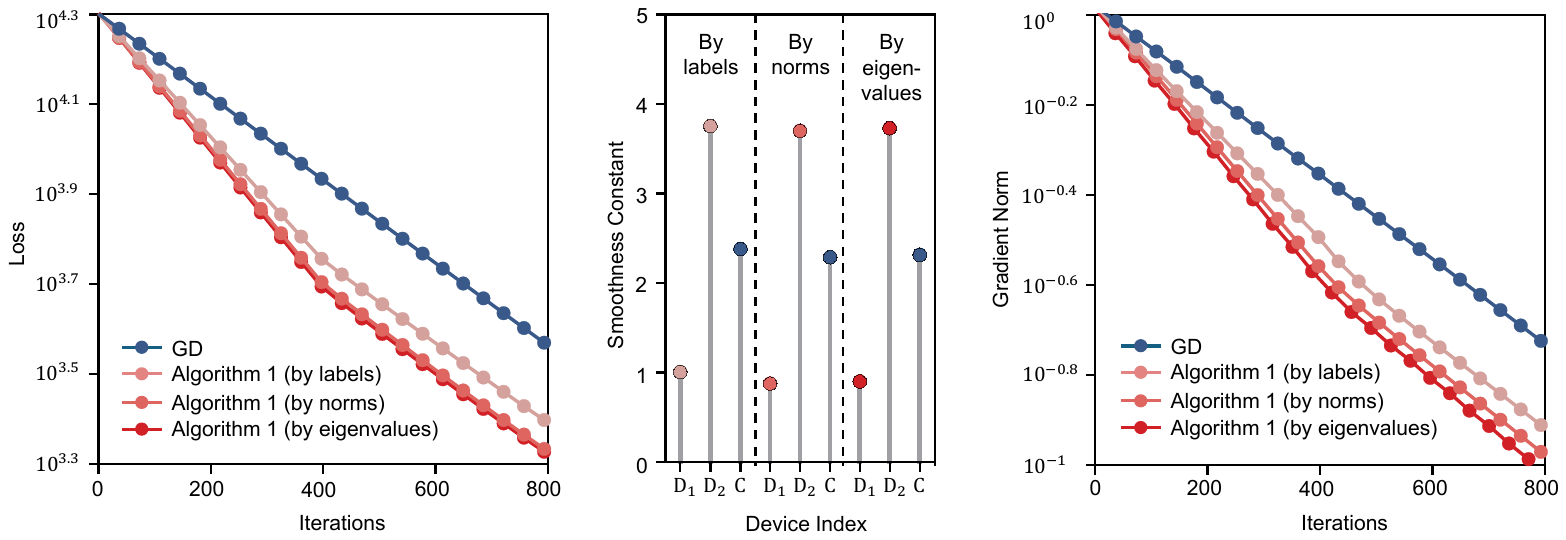}}
\quad\quad\quad\subfigure[]{\label{fig2b}
\includegraphics[width=2.5cm, height=4cm]{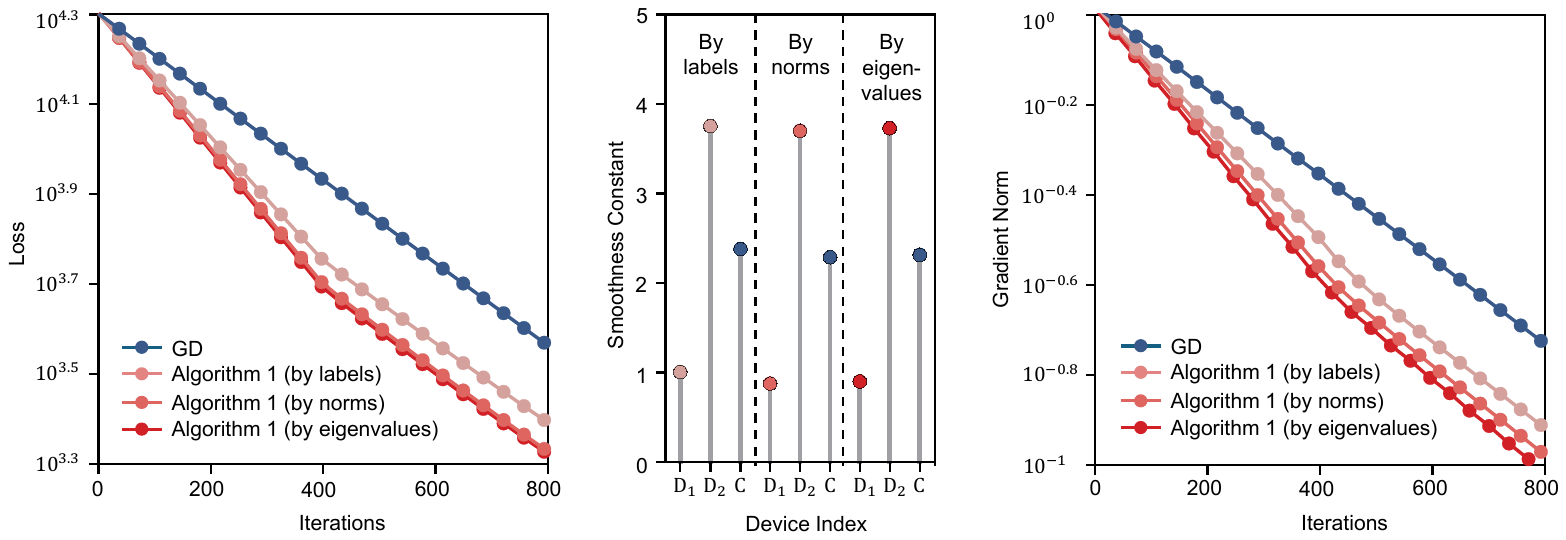}}
\quad\quad\quad\subfigure[]{\label{fig2c}
\includegraphics[width=4cm, height=4cm]{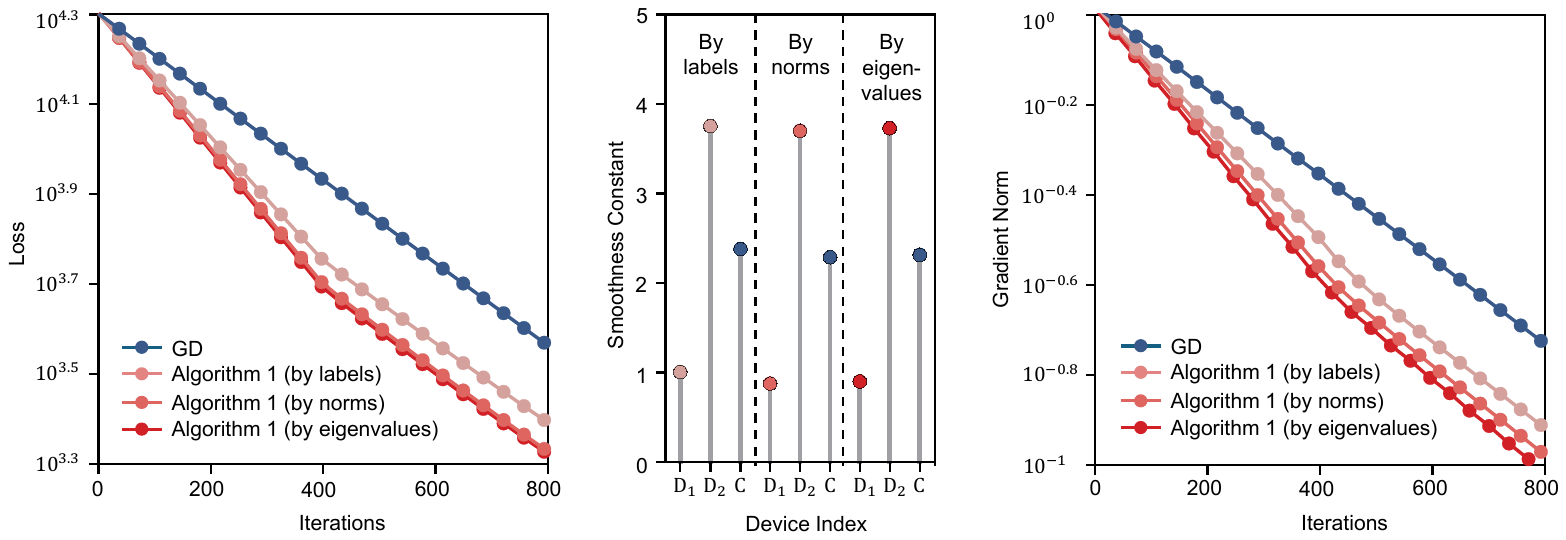}}
\caption{Comparison of Algorithm~\ref{algorithm} with its centralized counterpart (labeled GD) using the W8A dataset. (a) and (c) compare the convergence performance between the centralized approach (labeled GD) and our proposed decentralized method under three partitioning schemes for the entire data, labeled Algorithm~\ref{algorithm} (by labels), Algorithm~\ref{algorithm} (by norms), and Algorithm~\ref{algorithm} (by eigenvalues), respectively. The results show that in all data partitioning schemes, using decentralization substantially reduces the number of iterations required to reach a certain accuracy level. (b) plots the resulting smoothness constants under the three partitioning schemes, respectively, where on the x-axis, ``C'' represents the smoothness constant of the entire dataset (used in the centralized case), and ``$\rm D_1$'' and ``$\rm D_2$'' represent the smoothness constants for device 1 and device 2, respectively. }
\label{fig:logistic}
\end{center}
\end{figure}

\section{Experimental results}
In this section, we present three experiments on benchmark datasets to validate the finding that decentralization can indeed reduce the number of iterations required to reach an optimal solution\footnote{Code available at \url{https://anonymous.4open.science/r/Acc-ML-via-Decentral-050D/README.md}}. Additional results on the MNIST dataset and ablation studies are presented in the Appendix \ref{add_experiments}.

\subsection{Logistic regression based Web categorization}
In this experiment, we consider a logistic regression categorization task using the W8A dataset \cite{chang2011libsvm}. In the dataset, each entry contains one feature and one label. The feature is a vector of 300 sparse keyword attributes extracted from each web page, and the label indicates the category of the entry \cite{platt199912}. We consider a logistic regression model for the data, where the local loss function is given by the following function:
\begin{equation}\label{eq:logistic_regression}
f_i(x) = \frac{1}{m} \sum_{j=1}^m \log\left(1 + \exp(-b_j a_j^T x)\right) + \frac{\mu}{2} \|x\|_2^2,
\end{equation}
where $a_j\in \mathbb{R}^{300}$ denotes the feature of the $j$th data entry, \(b_j\in\{-1, 1\}\) denotes its label and $m$ denotes the number of data samples. 

In the experiment, we compare our decentralized optimization approach to~\eqref{eq:logistic_regression} (i.e., Algorithm~\ref{algorithm}) and its centralized counterpart (see update rule in~\eqref{eq:gradient_descent}). Based on our results in Section~\ref{sec:PEP results}, greater heterogeneity in data distributions leads to increased acceleration of Algorithm~\ref{algorithm}. Therefore, we consider three data partitioning schemes that produce heterogeneous data distributions. In the first partitioning scheme, we split the data into two groups based on their label values: all entries with label ``1'' are assigned to one device, while all entries with label ``$-1$'' are assigned to the other device. In the second partitioning scheme, we sort all data entries based on the norm of the feature vector \(\|a_j\|\) and assign the smaller half to device 1, while the larger half is assigned to device 2. In the third partitioning scheme, we order the data entries according to the maximum eigenvalue of \(a_ja_j^T\) and allocate the smaller half to device 1 and the remainder to device 2. Since the smoothness constants are determined by the Hessian matrix—which is directly influenced by the factors used in the three partitioning schemes—the two devices have different smoothness constants in all cases. In the decentralized case, we run a centralized gradient descent (labeled as GD) with the gradient calculated from all data entries without any partitioning. The comparison results are summarized in Fig.~\ref{fig:logistic}. It is clear that decentralization can indeed significantly enhance convergence speed.

\subsection{Image classification on CIFAR-10}
In this experiment, we train a deep neural network on the CIFAR-10 dataset \cite{krizhevsky2009learning}, which offers greater diversity and complexity compared to the MNIST dataset. The CIFAR-10 dataset consists of 60,000 color images of size $32\times32$ pixels, divided into 10 classes. Each class contains 6,000 images, with 5,000 allocated for training and 1,000 for testing. {In this experiment, we employ a four-level CNN–based classifier. The architecture consists of four convolutional layers with 32, 64, 128, and 128 filters, respectively. Max pooling layers are applied after the second and fourth convolutional layers. Finally, the network includes a global average pooling layer and a fully connected dense layer that maps the extracted features to 10 output classes.} In our implementation of the decentralized Algorithm~\ref{algorithm}, we partition the entire dataset across ten devices based on labels—that is, all data entries with the same label are assigned to the same device. Similar to the previous experiment, we used a small number of samples to estimate the smoothness constant for each device. However, instead of using full-batch gradient computation, we employ mini-batch based gradient computation, which only uses a few training samples at an iteration. To ensure a fair comparison, in Algorithm~\ref{algorithm}, each device selects a single random sample for gradient computation in each iteration, while the centralized approach selects 10 random samples per iteration. In this way, both algorithms process the same total number of samples in each iteration. The results are summarized in Fig.~\ref{fig:cifar10}. It is evident that decentralization can substantially reduce the number of iterations required to reach an optimal solution. It is worth noting that the experiment was conducted multiple times using different smoothness constants, each 
estimated from distinct subsets of data (see Fig.~\ref{fig4c} for three instances of estimated smoothness constants). Despite variations in these estimates, the 
results consistently demonstrate that decentralization achieves clear acceleration compared to the centralized case.
\begin{figure} [h]
\begin{center}
\!\!\!\!\begin{minipage}[b]{0.57\linewidth}
\subfigure[]{\label{fig4a}
\includegraphics[width=7.2cm,height=5cm]{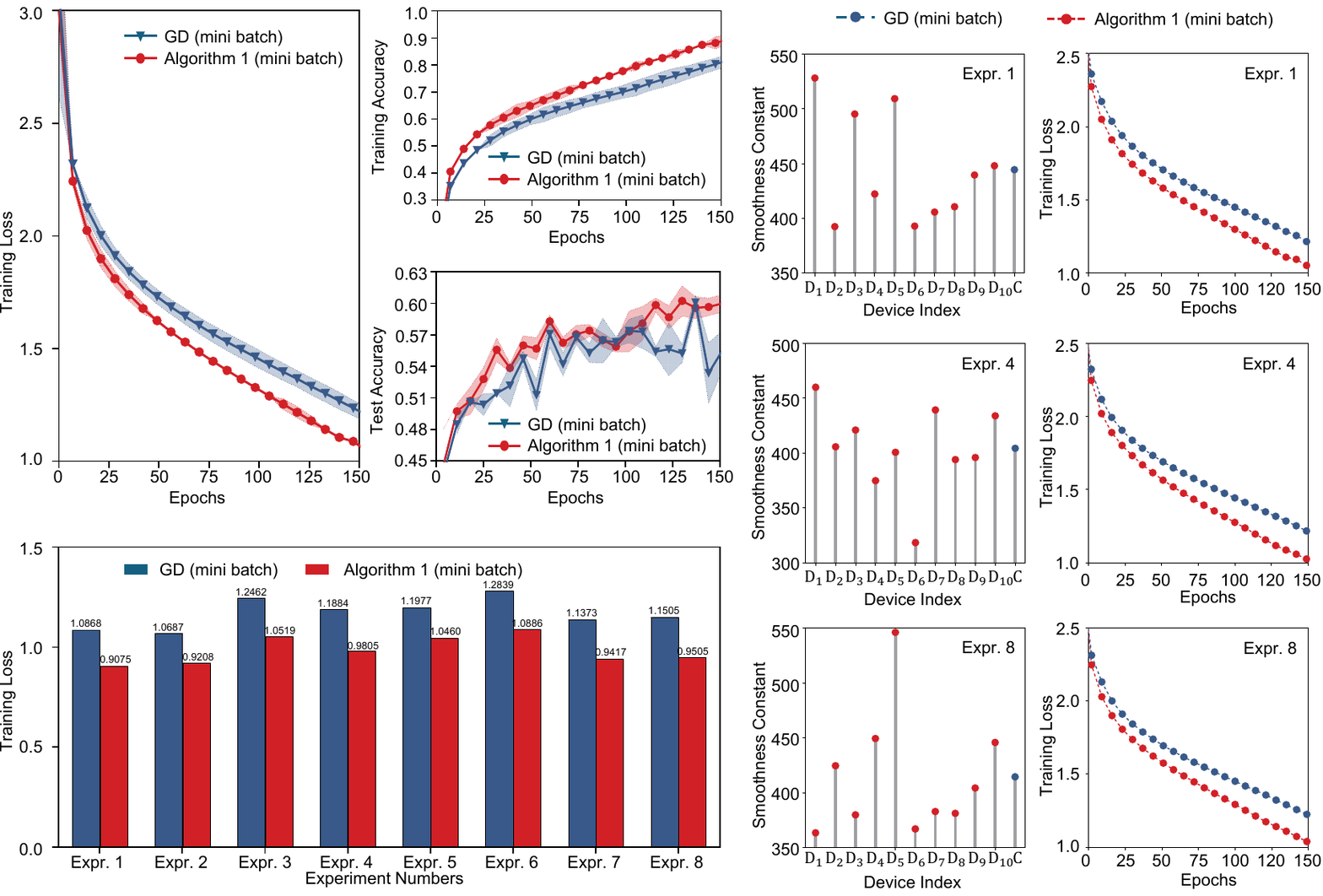}}
\vspace{0pt} 
\subfigure[]{\label{fig4b}
\includegraphics[width=7.2cm,height=3.5cm]{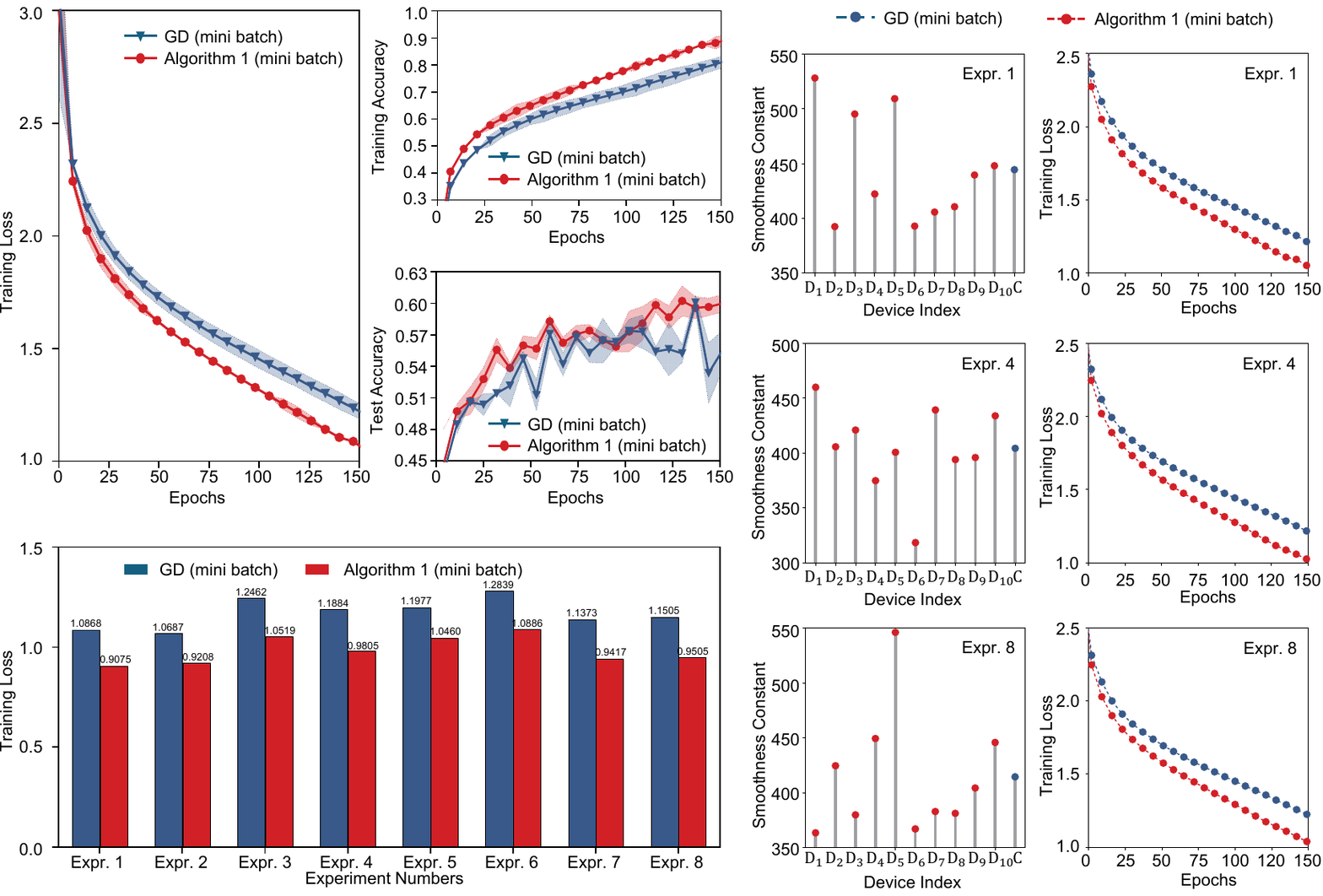}}
\end{minipage}
\!\!\!\!\!\!\begin{minipage}[b]{0.40\linewidth}
{\raisebox{0.06\height}{ 
\!\!\subfigure[]{\label{fig4c}
\includegraphics[width=6cm,height=9.5cm]{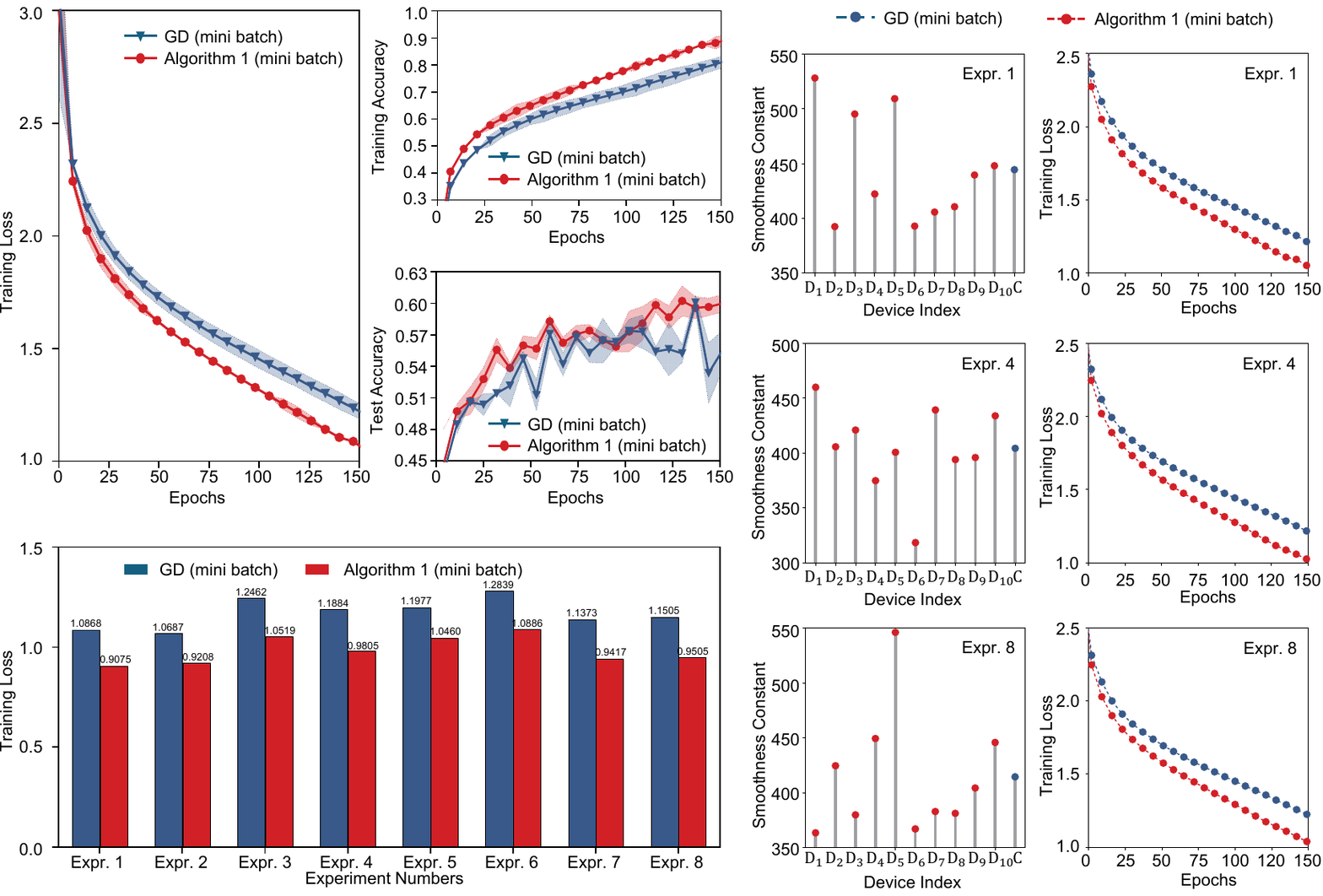}}
}}
\end{minipage}
\caption{Comparison of Algorithm~\ref{algorithm} with its centralized counterpart (labeled GD) using the CIFAR-10 dataset. In this experiment, both algorithms use mini-batch gradient computation: in each iteration, the centralized approach processes ten randomly selected samples, while each device in the decentralized approach computes gradients using one sample from the data allocated to it (so both approaches process the same number of ten samples in each iteration). (a) shows that the decentralized approach achieves faster convergence than the centralized counterpart in terms of training loss, training accuracy, and test accuracy. (b) compares the training loss over eight runs with different estimated smoothness constants and random initializations. (c) illustrates the estimated smoothness constants alongside the training loss trajectories of three selected runs from the eight. On the x-axis representing the device index, the label ``C'' denotes the centralized case, and ``$\rm D_1$'' through ``$\rm D_{10}$'' represent the devices in the decentralized case.}
\label{fig:cifar10}
\end{center}
\end{figure}

\subsection{Sentiment analysis on SST-2}
In this experiment, we consider a BERT-based classifier for the Stanford Sentiment Treebank (SST-2) dataset~\cite{SST2}, a widely used benchmark for evaluating language-model fine-tuning in the field of natural language processing \cite{wang2018glue}. SST-2 is a binary sentiment classification dataset with positive and negative labels, containing a total of $67,349$ training samples. We partition the training dataset across five devices using a Dirichlet distribution with parameter $0.1$, creating a high degree of label-based heterogeneity among the devices. Our classifier employs a BERT-base encoder with a classification head~\cite{devlin2019bert}. The BERT layers are frozen except for the last encoder layer, which is fine-tuned along with the classification head to adapt the representations to the SST-2 task. Both centralized gradient descent (GD) and our proposed Algorithm 1 are evaluated under this setup. Since the labels of the test dataset are not publicly available, we evaluate the test accuracy on the validation dataset instead. The results are summarized in Fig.~\ref{fig:sst2}. It is evident that our approach outperforms the centralized baseline, achieving accelerated convergence in language-model fine-tuning.
\vspace{1em}

\begin{remark}
In logistic regression experiments, the smoothness constant $L_i$ for each client is computed directly from the local Hessian matrix using its largest eigenvalue. In neural network training, including CNN training and BERT fine-tuning, since a closed-form expression of $L_i$ is generally unavailable, we estimate it numerically using a gradient difference approximation. Specifically, a mini-batch of local data is fixed, and multiple model parameters $x$ are randomly sampled from a region that empirically covers typical parameter variations during training. For each sampled parameter, a small perturbation $\varepsilon$ is applied, and the ratio of gradient difference to parameter difference is computed. This procedure is repeated tens of thousands of times, and then the maximal value of the resulting ratios is taken as an estimate of the local smoothness constant $L_i$.
\end{remark}
\begin{figure} [ht]
\begin{center}
\setcounter{subfigure}{0} 
\subfigure[Training Loss]{\label{fig:sst2a}
\includegraphics[width=4cm, height=4cm]{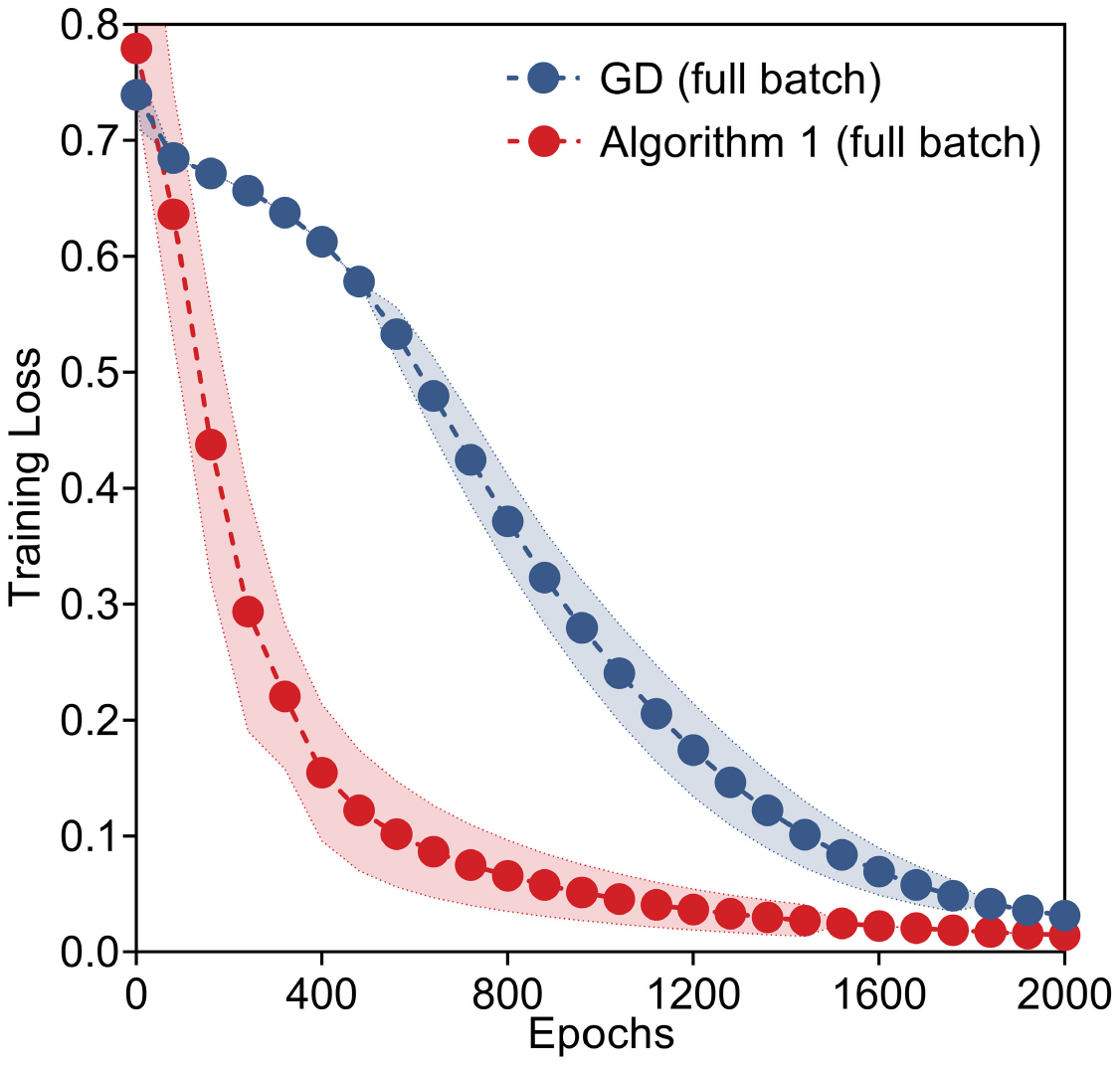}}
\quad\quad\subfigure[Training Accuracy]{\label{fig:sst2b}
\includegraphics[width=4cm, height=4cm]{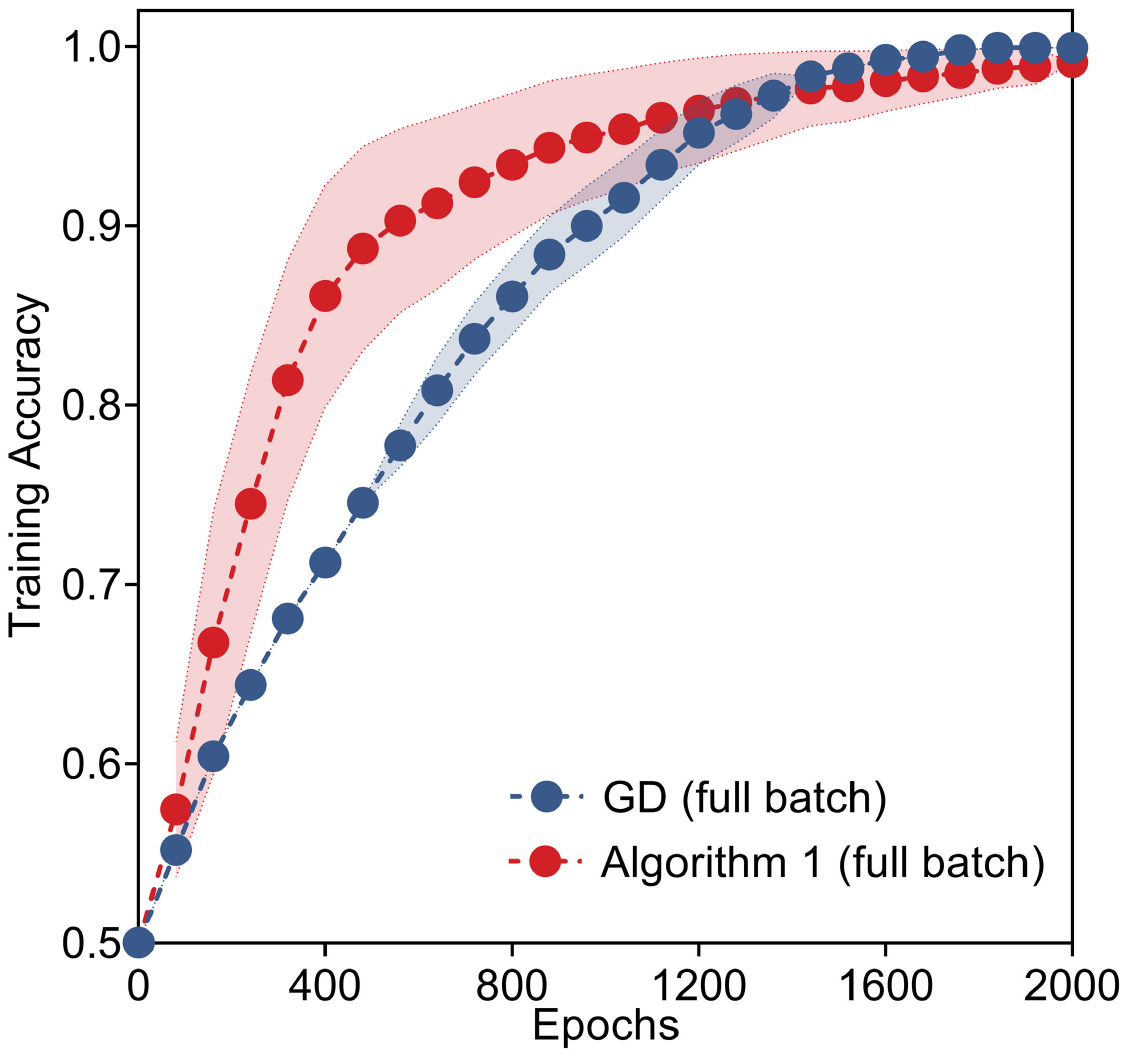}}
\quad\quad\subfigure[Test Accuracy]{\label{fig:sst2c}
\includegraphics[width=4cm, height=4cm]{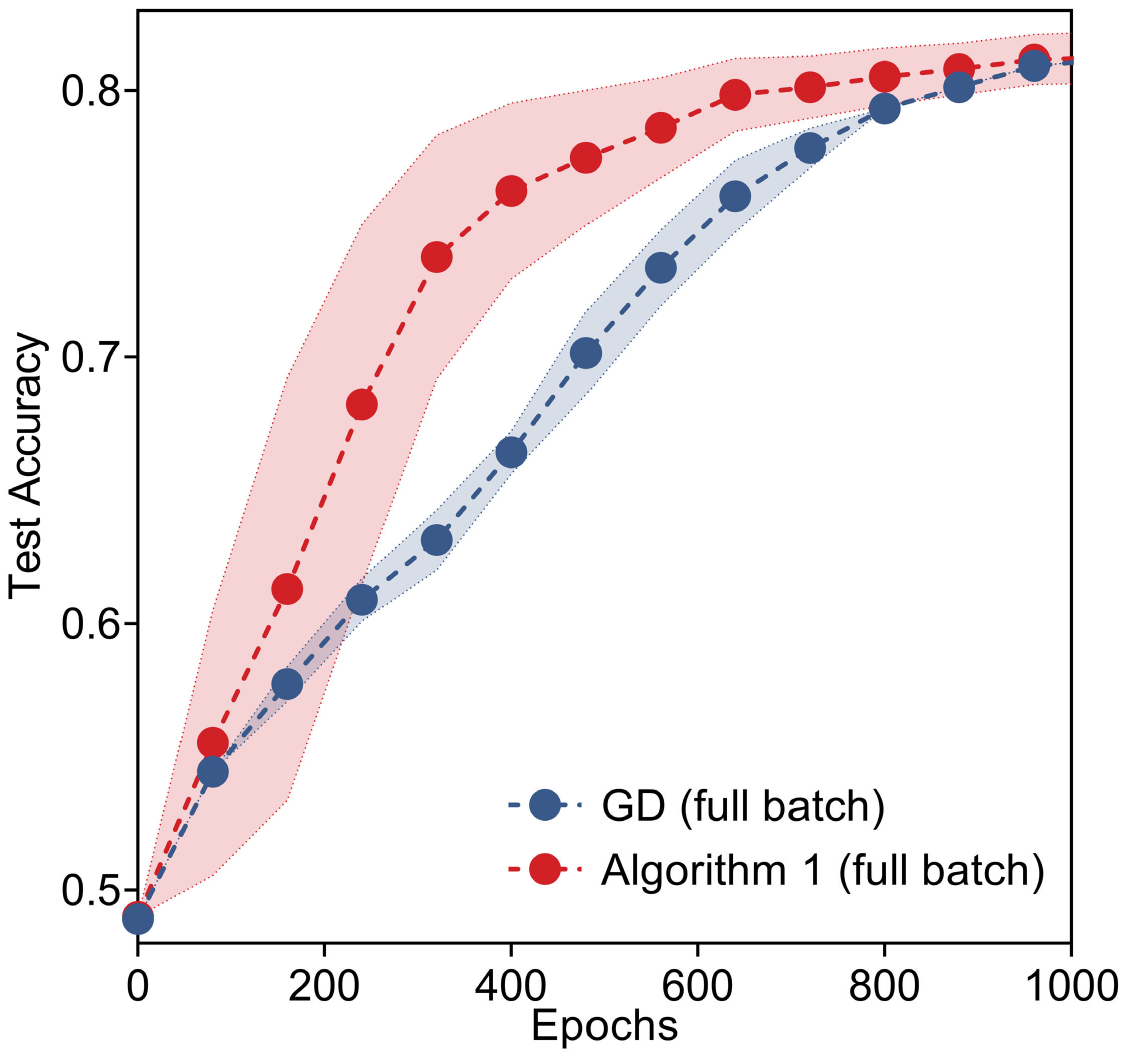}}
\caption{Comparison of Algorithm 1 with its centralized counterpart (labeled GD) on the SST-2 dataset. In this experiment, both algorithms use full-batch gradient computation: the centralized approach processes the entire dataset, while in our decentralization based approach, each device computes gradients using all the data allocated to it.
(a)-(c) show that our approach converges faster than the centralized counterpart.}
\label{fig:sst2}
\end{center}
\end{figure}

\section{Discussion and conclusion}
Our finding—that decentralization can accelerate the convergence of optimization algorithms—is surprising, but it carries significant implications. Faster convergence, meaning fewer iterations to reach a satisfactory solution, is critical for meeting the timing constraints of real-time systems such as autonomous vehicles. Additionally, reducing computational complexity is essential when training large-scale neural networks. Our finding provides strong support for the development of decentralized optimization algorithms—not only as a workaround when centralized approaches are infeasible due to data aggregation constraints imposed by privacy regulations, but also as a more powerful alternative when convergence speed is critical.
Notably, our result was obtained under the assumption that the centralized server possesses computational power equal to the combined capabilities of all devices in the decentralized setup. This already gives the centralized approach an advantage in comparison. In practice, however, decentralized systems can harness the aggregate computational power of multiple devices, while a single centralized server remains inherently limited. A limitation of the current study is that it does not account for communication overhead, which may become non-negligible in settings where high-speed communication channels are unavailable. In fact, when data are distributed across multiple locations, the communication overhead required to aggregate raw data for centralized optimization is typically much greater than that incurred by exchanging optimization variables in decentralized approaches. Therefore, we expect that a similar conclusion holds in this case and plan to systematically investigate this problem in future work.



\clearpage
\bibliography{nips2026}

\appendix
\section{Performance evaluation}
\subsection{PEP formulation}
It is worth noting that although numerous optimization algorithms have been proposed, quantifying and comparing their performance—typically evaluated through worst-case guarantees in the optimization and machine learning communities \cite{nemirovskij1983problem, polyak87, nesterov2013introductory}—remains intrinsically challenging. This difficulty arises because most existing results, which are primarily based on analytical methods, tend to be overly conservative: they generally provide only sufficient (but not necessary) conditions for convergence, or offer asymptotic rates of convergence rather than exact quantitative characterizations \cite{taylor2017smooth}. Recently, a computational approach was proposed that can provide the {\it exact} worst-case performance of an optimization algorithm. This approach is based on formulating performance analysis as a semidefinite program (see, e.g., \cite{vandenberghe1996semidefinite}) and is also known as the performance estimation problem (PEP) \cite{drori2014performance,taylor2017smooth}. This approach can obtain exact performance bounds for algorithms over specified function classes \cite{taylor2017smooth}, in contrast to traditional asymptotic analyses that often produce loose guarantees. This approach has recently been extended to the decentralized case \cite{colla2023automatic}. 

The idea behind the PEP is to frame the worst-case performance evaluation of an algorithm as an optimization problem over all admissible functions (e.g., smooth and convex) and initial conditions. Although this optimization is inherently infinite-dimensional, as it involves a continuous function over its variables, \cite{taylor2017exact,taylor2017smooth} demonstrate that it can be solved {\it exactly} as a finite-dimensional convex optimization problem by replacing the function constraint with an equivalent finite dimensional interpolation condition. Generally, the framework can be formulated as follows:
\begin{align*}
\tag{PEP}
\sup_{f, x^0, \ldots, x^K, x^*} \ & \mathcal{P}(\mathcal{O}_f, x^0, \ldots, x^K, x^*) \\
\text{such that} \quad & f \in \mathcal{F}, \\
& x^* \text{ is optimal for } f, \\
& x^1, \ldots, x^K \text{ are generated from } x^0~\text{by method}~\mathcal{M} ~\text{with}~\mathcal{O}_f, \\
& \|x^0 - x^*\|_2 \leq R,
\end{align*}
where \(\mathcal{M}\) denotes an optimization algorithm, \(\mathcal{O}_f\) denotes the oracle that returns the corresponding function value and gradient at a given point of the function \(f\), \(\mathcal{F}\) is the class of functions to which \(f\) belongs (e.g., strongly convex functions or general convex functions), \(K\) is the number of steps the algorithm \(\mathcal{M}\) performs, and \(R\) is a given constant. By solving the PEP, one can obtain {\it exact} performance bounds for algorithms over designated function classes, in contrast to traditional asymptotic analyses, which often yield loose guarantees. 

The PEP framework can be directly applied to evaluate the performance of centralized gradient descent algorithms. To evaluate the performance of our proposed decentralized approach, we extend the result in \cite{colla2023automatic} for homogeneous step sizes to the heterogeneous step size case and provide a new PEP formulation. 

To describe the PEP framework, we first define the augmented states as
$
x^k = [x^k_1,\; x^k_2,\; \dots,\; x^k_N] \in \mathbb{R}^{d \times N},
$
where the superscript $k$ belongs to the index set $I_K = \{0, \dots, K\}$. Similarly, we define the augmented local optimum and global optimum as
$
x^\star = [x^\star_1,\; x^\star_2,\; \dots,\; x^\star_N]$ and $ x^\ast = [x_1^\ast,\; x_2^\ast,\; \dots,\; x_N^\ast],
$
respectively (where $x_1^\ast= x_2^\ast= \dots=x_N^\ast$ will be enforced in the PEP formulation). To simplify notation, we extend the index set $I_K$ for augmented states to include these two optimal states, defining the augmented index set as
$I_K^{\star,\ast} = \{0, \dots, K, \star, \ast\}.
$

In a similar manner, we define the augmented gradients and function values as
$$
g^k = [g^k_1,\; g^k_2,\; \dots,\; g^k_N] \in \mathbb{R}^{d \times N}, \quad f^k = [f^k_1,\; f^k_2,\; \dots,\; f^k_N] \in \mathbb{R}^{1\times N}, \quad \text{for } k \in I_K^{\star,\ast}.
$$

Under these definitions, we have
\begin{equation*}
\begin{aligned}
&P = [x^0 \quad g^0 \quad g^1 \quad ... \quad g^K \quad g^{\star} \quad g^{\ast} \quad x^{\star}\quad x^{\ast}]\in \mathbb{R}^{d\times [(K+6)N]},\\
&G = P^TP \in \mathbb{S}^{((K+6)N)\times((K+6)N)}_{+},\;
F = [f^0\quad f^1\quad ... \quad f^K \quad f^{\star} \quad f^{\ast}] \in \mathbb{R}^{1\times(K+3)N},
\end{aligned}
\end{equation*}
where \(\mathbb{S}_{+}\) denotes the set of symmetric positive semidefinite matrices. We use the standard notation \( e_i \in \mathbb{R}^d \) to denote the unit \(d\)-dimensional vector with a single \(1\) in the \( i \)-th component and \(\mathbf{1}_N\in \mathbb{R}^N\) as the all one vector. The following vectors are employed to select specific columns in \(P\) and \(F\):
\begin{equation*}
\begin{aligned}
&\textbf{f}_i^{k} = e_{kN+i}\in \mathbb{R}^{(K+3)N},\textbf{g}_i^{k} = e_{(k+1)N+i}\in \mathbb{R}^{(K+6)N},\textbf{g}_{i}^* =e_{(K+3)N+i}\in \mathbb{R}^{(K+6)N},\\
&\textbf{g}_{i}^{\star} =e_{(K+2)N+i}\in \mathbb{R}^{(K+6)N},\textbf{x}_{i}^{\star} = e_{(K+4)N+i}\in \mathbb{R}^{(K+6)N},\\
&\textbf{f}_{i}^{\star} = e_{(K+1)N+i} \in \mathbb{R}^{(K+3)N}, \textbf{f}^*_{i} = e_{(K+3)N+i}\in\mathbb{R}^{(K+3)N},\\
&\textbf{x}_i^{0} = e_{i}\in \mathbb{R}^{(K+6)N},\forall i\in [N], k \in I_K; \textbf{x}^* = e_{(K+5)N+i}\in \mathbb{R}^{(K+6)N}.\\
\end{aligned}
\end{equation*}
The stacked version of these vectors is analogous to the formulation above. Now we are in a position to give our formulation of the PEP for the decentralized algorithm as follows
\begin{align}
\max_{F,G} \quad & \quad F*\left(\sum_{i=1}^N \textbf{f}_i^K - \textbf{f}^*\mathbf{1}_N\right)\label{performance_measure}\\
\text{subject to}
\quad & \quad \langle G, A(\{\textbf{x}_{i}^k, \textbf{g}_{i}^k, \textbf{f}_i^k\}_{k\in \{p,q\}}, \mu_i, L_i)\rangle \le F*(\textbf{f}_i^{p} - \textbf{f}_i^{q}), \; \forall i \in [N],\; p, q \in I_K^{\star,\ast},\label{interpolation_constraints}\\
\quad & \quad \{\textbf{x}_{i}^k\}_{i\in [N], k\in I_K} \;\text{are generated recursively by Algorithm~\ref{algorithm}},\;\label{algorithm_constraints}\\
\quad & \quad \langle G\;,\textbf{g}^{*}\mathbf{1}_N\mathbf{1}_N^T\textbf{g}^{*T}\rangle = 0,\quad \langle G, \textbf{g}_{i}^{\star}\textbf{g}_{i}^{\star T}\rangle=0 ,\;\forall i \in [N],\\
\quad & \quad \langle G, (\textbf{x}_i^0-\textbf{x}^*)(\textbf{x}_i^0-\textbf{x}^*)^T\rangle \le R_0^2,\;\forall i\in [N],\label{initial_constraint}\\
\quad & \quad \langle G, (\textbf{x}_{i}^{\star}-\textbf{x}^*)(\textbf{x}_{i}^{\star}-\textbf{x}^*)^T\rangle \le R_*^2,\;\forall i \in [N],\label{optimum_constraint}\\
\quad & \quad \langle G, (\textbf{x}_{1}^{0}-\textbf{x}^0_i)(\textbf{x}_{1}^{0}-\textbf{x}^0_i)^T\rangle = 0,\;\forall i \in [N],\label{equal start}\\
\quad & \quad \langle G, (\textbf{x}_{1}^{\ast}-\textbf{x}^{\ast}_i)(\textbf{x}_{1}^{\ast}-\textbf{x}^{\ast}_i)^T\rangle = 0,\;\forall i \in [N],\label{equal global optimum}\\
\quad & \quad G \succeq 0,\label{semidefinite_constraint}\\
\quad & \quad \mathrm{rank}(G) \le d,\label{rank_G_constraint}
\end{align}
where
\begin{align*}
&A(\{\textbf{x}_{i}^k, \textbf{g}_{i}^k, \textbf{f}_i^k\}_{k\in \{p,q\}}, \mu_i, L_i)\\
&= \frac{1}{2}\left[(\textbf{x}^p_i - \textbf{x}^q_i){\textbf{g}^{q}_i}^T + \textbf{g}^{q}_i(\textbf{x}^p_i - \textbf{x}^q_i)^T\right] \\
&- \frac{\mu_i(L_i-\mu_i)}{2L_i^2} \left[(\textbf{x}^q_i - \textbf{x}^p_i)(\textbf{g}^q_i - \textbf{g}^p_i)^T+(\textbf{g}^q_i - \textbf{g}^p_i)(\textbf{x}^q_i - \textbf{x}^p_i)^T\right]\\
&+\frac{1}{2} \left( 1 - \frac{\mu_i}{L_i} \right)\left[\frac{1}{L_i} (\textbf{g}^p_i - \textbf{g}^q_i)(\textbf{g}^p_i - \textbf{g}^q_i)^T + \mu_i (\textbf{x}_i^p - \textbf{x}_i^q )(\textbf{x}_i^p - \textbf{x}_i^q)^T \right],
\end{align*}
and \(\langle\cdot, \cdot\rangle\) denotes the standard matrix inner product defined as \(\langle A, B\rangle\) = \(\mathrm{trace}(AB^T)\), \( R_* \) bounds the distance between local and global optima, i.e., \( \|x_{i}^{\star} - x^*\| \leq R_* \), \( R_0 \) bounds the distance between the initial point and the global optimum, i.e., \( \|x^0_i - x^*\| \leq R_0 \), and generally \(R_*\ll R_0\). We use the measure here because an equal start guarantees that the algorithm generates local states uniformly across all agents. Thus, we do not explicitly make use of $f(\bar{x})$ here. The constraints in ~\eqref{performance_measure}--\eqref{initial_constraint} are standard in the formulation of the PEP \cite{taylor2017smooth}. The constraints in~\eqref{equal start} enforce an identical starting point, while the constraints in~\eqref{equal global optimum} ensure that the global optimum is consistent across all agents.

\subsection{Proof of Theorem 1}\label{proof}
\begin{theorem}(Convergence Acceleration of Algorithm 1)
    Consider the optimization problem defined in (1), where the function $f$ belongs to the class $\mathcal{F}_{\mu, L}$, characterized by the smoothness parameter $L$  and strong convexity parameter $\mu$. Let $\mathcal{A}_1$ denote Algorithm 1 with local step sizes $\frac{1}{L_i}$, and $\mathcal{A}_c$ denote centralized gradient descent with a uniform step size $\frac{1}{L}$. For the performance metric
    $$
    P_{\mathcal{A}} =
    \max_{f\in\mathcal{F}_{\mu,L}}
    \left\{
    \frac{1}{N}\sum_{i=1}^N ||x_{i,\mathcal{A}}^{k} - x^\ast||^2
    \right\},
    $$
    which measures the worst-case optimization error at iteration $k$ over all $f\in\mathcal{F}_{\mu, L}$, we consider the ratio $P_{\mathcal{A}_1}/P_{\mathcal{A}_c}$ as the relative acceleration of Algorithm 1 compared with centralized gradient descent (with values below $1$ indicating acceleration and smaller ratios indicating stronger acceleration). 
    Figure 1(b) gives the exact ratio values under different function classes, and the ratio values being strictly less than 1 for all tested function classes $\mathcal{F}_{\mu, L}$ in the heterogeneous setting confirm that Algorithm 1 with local step sizes achieves a strictly faster convergence rate than centralized gradient descent under all considered function classes.
\end{theorem}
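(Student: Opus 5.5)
The statement is a computer-assisted claim: the ratio $P_{\mathcal{A}_1}/P_{\mathcal{A}_c}$ is computed exactly by PEP and found to be below $1$ on a finite family of function classes. So the proof will have two parts. First I will show that each of $P_{\mathcal{A}_1}$ and $P_{\mathcal{A}_c}$ equals the optimal value of a finite-dimensional SDP. Then I will certify the computed values so that the inequality $P_{\mathcal{A}_1}<P_{\mathcal{A}_c}$ is rigorous for every tested $(L_1,L_2,\mu)$, not just a floating-point observation.

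\textbf{Step 1: reduce both worst-case problems to SDPs.}
\begin{itemize}
\item For $\mathcal{A}_c$, I will invoke the interpolation theorem for $\mathcal{F}_{\mu,L}$ of \cite{taylor2017smooth}. A set of triples $\{x^k,g^k,f^k\}$ is interpolable by some $f\in\mathcal{F}_{\mu,L}$ if and only if the pairwise inequalities encoded by the matrix $A(\cdot,\mu,L)$ hold.
\item The GD iterates $x^{k+1}=x^k-\frac1L g^k$ are linear in $(x^0,g^0,\dots,g^K,x^\ast)$. Hence the objective and all constraints are linear in the Gram matrix $G$ and the vector $F$.
\item Dropping the rank constraint \eqref{rank_G_constraint} is lossless whenever $d\ge\dim G$, because a PSD $G$ then factors as $P^\top P$ with $P\in\mathbb{R}^{d\times\dim G}$.
\item For $\mathcal{A}_1$, I will apply the same interpolation theorem agentwise to each $f_i\in\mathcal{F}_{\mu_i,L_i}$. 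The server update $x^k=x^{k-1}-\frac1N\sum_i\alpha_i^{(k)}g_i^{k-1}$ is again linear in $P$. The step sizes $\alpha_i^{(k)}$ switch from $1/L_i$ to $1/L$ at a prescribed iteration.
\item The consensus constraints \eqref{equal start} and \eqref{equal global optimum}, together with $\sum_i g_i^\ast=0$, encode that $x^\ast$ minimizes $f=\frac1N\sum_i f_i$. This yields exactly the SDP \eqref{performance_measure}--\eqref{semidefinite_constraint} with the performance measure replaced by $\frac1N\sum_i\|x_i^k-x^\ast\|^2$, which is linear in $G$.
\end{itemize}

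One care point concerns the function class. The statement maximizes over $f\in\mathcal{F}_{\mu,L}$, whereas the decentralized SDP maximizes over decompositions with $f_i\in\mathcal{F}_{\mu_i,L_i}$. I will state explicitly that the comparison is between these two classes, and note that the centralized method uses the smoothness constant $L=\frac1N\sum L_i$ of the sum.

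\textbf{Step 2: certify the numerical values.} For each tested parameter tuple, I will extract from MOSEK both a primal feasible pair $(G,F)$ and a dual feasible multiplier set, namely nonnegative multipliers $\lambda_{pq}^i$ on the interpolation constraints and a PSD slack matrix.
\begin{itemize}
\item Weak duality then gives a rigorous upper bound on $P_{\mathcal{A}_1}$ from the dual solution of the decentralized SDP.
\item A feasible primal point of the centralized SDP gives a rigorous lower bound on $P_{\mathcal{A}_c}$. For GD one can also use the known closed form $\max\{(1-\mu/L)^2,(1-1)^2\}^k R_0^2=(1-\mu/L)^{2k}R_0^2$.
\item If upper$(P_{\mathcal{A}_1})<$ lower$(P_{\mathcal{A}_c})$, the strict inequality is proven for that instance.
\end{itemize}

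\textbf{Main obstacle.} Making the dual certificate exact is the hard part, because the solver returns a slightly infeasible dual. My first approach is to round the multipliers to rationals, recompute the slack matrix exactly, and verify its PSD-ness with an exact $LDL^\top$ factorization; if necessary, I will first shift the multipliers by a small amount. If the rounded slack is not PSD, the fallback is to bound its most negative eigenvalue $-\delta$ and absorb it using $\operatorname{tr}G\le C R_0^2$. That trace bound follows from the initial-distance constraint and the Lipschitz bounds on the gradients, and it yields an explicit error term $\delta C R_0^2$ to be added to the upper bound. Since the reported ratios in Fig.~\ref{fig1b} are well below $1$ whenever $L_1\neq L_2$, this slack should suffice.
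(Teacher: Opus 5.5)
Your proposal is correct in outline, and it takes a different, considerably more rigorous route than the paper. The paper's proof is purely interpretive. It notes that $P_{\mathcal{A}}<\infty$ and infers from this that a maximizing function exists, which does not follow from finiteness alone. It then concludes that the PEP value is the exact worst case and reads the ratios below $1$ directly off Fig.~1(b), taking the solver output at face value.

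You instead make the reduction explicit: agentwise interpolation over the product class $f_i\in\mathcal{F}_{\mu_i,L_i}$, optimality of $x^\ast$ via $\sum_i g_i^\ast=0$, rank relaxation for $d\ge\dim G$, and the distance objective the theorem actually uses rather than the appendix's function-value objective. You then certify each inequality by a sandwich. A dual certificate bounds $P_{\mathcal{A}_1}$ from above by weak duality, and the quadratic $\frac{\mu}{2}\|x-x^\ast\|^2$ attains $(1-\mu/L)^{2k}R_0^2$ for gradient descent.

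This buys robustness to floating-point error. It also removes any need for attainment or for exactness of the decentralized SDP, since a relaxation still yields a valid upper bound; only the centralized side needs a genuine instance. Because that quadratic also splits as $f_i=\frac{\mu_i}{2}\|x-x^\ast\|^2\in\mathcal{F}_{\mu_i,L_i}$ (with $\mu=\frac1N\sum_i\mu_i$), gradient descent's worst case is the same under either reading of the function class. That settles the mismatch you flagged.

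The price is per-instance certification work. Exactly like the paper, you cover only the finitely many tested tuples; these must fix $k$, $R_0$, $R_\ast$ and the switching iteration, not only $(L_1,L_2,\mu)$. You also cover the fixed-switch variant rather than the threshold test of Algorithm~1, and neither argument gives a rate in any asymptotic sense.

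Two technical cautions. First, the matrix $A$ printed in the appendix does not match the interpolation condition of \cite{taylor2017smooth}. It has $\frac12(1-\mu_i/L_i)$ on the quadratic terms and $\frac{\mu_i(L_i-\mu_i)}{2L_i^2}$ on the cross term, where that condition has $\frac{1}{2(1-\mu_i/L_i)}$ and $\frac{\mu_i}{2(L_i-\mu_i)}$; as printed it is only a relaxation. So state the theorem's inequality directly instead of citing $A$. Take the lower bound on $P_{\mathcal{A}_c}$ from the closed form rather than from a primal point of a possibly relaxed SDP. And recompute the margins yourself rather than relying on those reported in Fig.~1(b).

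Second, your fallback trace bound requires normalizing $x^\ast=0$; the problem is translation invariant, so $\mathrm{tr}\,G$ is otherwise unbounded on the feasible set. After rounding, you must also restore exactly the dual equalities attached to the free variables $F$, not only nonnegativity of the multipliers.
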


\begin{proof}
    First, we have \(P_{\mathcal{A}} < \infty\) for any finite $k$, which implies that the worst-case optimization error of Algorithm 1 over the function class \(\mathcal{F}_{\mu, L}\) is always finite for any $k$. Therefore, the set $
     \arg\max_{f \in \mathcal{F}_{\mu, L}} \left\{ \frac{1}{N}\sum_{i=1}^N \|x_{i,\mathcal{A}}^{k} - x^\ast\|^2 \right\}
    $
    is guaranteed to be nonempty. This means that there exists at least one function instance \(f_{\mathcal{A}}^\ast\) such that,  Algorithm 1, when executed with step-size scheme \(\mathcal{A}\), indeed attains its worst-case performance at iteration \(k\). 
    
    For any such maximizer 
    \(
    f_{\mathcal{A}}^\ast \in \arg\max_{f \in \mathcal{F}_{\mu, L}} \left\{ \frac{1}{N}\sum_{i=1}^N \|x_{i,\mathcal{A}}^{k} - x^\ast\|^2 \right\},
    \)
    the value \(P_{\mathcal{A}}\) returned by the PEP is therefore an \emph{exact}, not merely upper-bounding, characterization of the true worst-case error over \(\mathcal{F}_{\mu, L}\). In other words, the PEP does not produce a loose bound: it identifies a specific problem instance on which Algorithm 1 performs exactly as poorly as the bound indicates.
    With this interpretation, the ratio \(\frac{P_{\mathcal{A}_1}}{P_{\mathcal{A}_c}}\) directly compares the exact worst-case optimization errors of Algorithm 1 versus centralized gradient descent. A value \(\frac{P_{\mathcal{A}_1}}{P_{\mathcal{A}_c}} < 1\) therefore means that—in the worst case—Algorithm 1 has strictly smaller optimization error than centralized gradient descent at iteration \(k\). Hence, the ratio values in Figure 1(b) quantitatively capture the exact speed gains achieved by Algorithm 1.
\end{proof}

\section{Additional experimental results}\label{add_experiments}
\subsection{Handwritten digits classification on MNIST}\label{mni}
In this experiment, we consider a neural network based classifier for the MNIST dataset of handwritten digits \cite{MNIST}, a widely used benchmark for training and evaluating models in the field of machine learning \cite{deng2012mnist}. The dataset contains 60,000 training images and 10,000 testing images, with each set containing a roughly equal number of images for each digit from 0 to 9. We employ a classifier based on a deep convolutional neural network (CNN). The architecture begins with two convolutional layers, with 16 and 32 filters respectively, each followed by a max pooling layer. Finally, the network includes a fully connected dense layer that maps the extracted features to 10 output classes. In the decentralized case, the data are partitioned across five devices in a label-based, heterogeneous manner. Specifically, all data entries with the same label are assigned to the same device, and each device holds data corresponding to two distinct labels. For each device, we estimate the smoothness constant using 1,000 data entries, following its formal definition. The results are summarized in Fig.~\ref{fig:mnist}. It is evident that decentralization can substantially reduce the number of iterations required to reach an optimal solution. It is worth noting that the experiment was conducted multiple times using different smoothness constants, each estimated from distinct subsets of data (see Fig.~\ref{fig3c} for three instances of estimated smoothness constants). Despite variations in these estimates, the results consistently indicate that decentralization yields significant acceleration over the centralized approach. Furthermore, in this experiment, we use the entire dataset (full batch) for gradient calculation, which may become computationally expensive for more complex images. Therefore, in the next experiment, where we consider a dataset with more complicated images, we employ mini-batch sampling to compute gradients at each iteration.
\begin{figure} [ht]
\begin{center}
\!\!\!\!\begin{minipage}[b]{0.57\linewidth}
\subfigure[]{\label{fig3a}
\includegraphics[width=7.2cm,height=5cm]{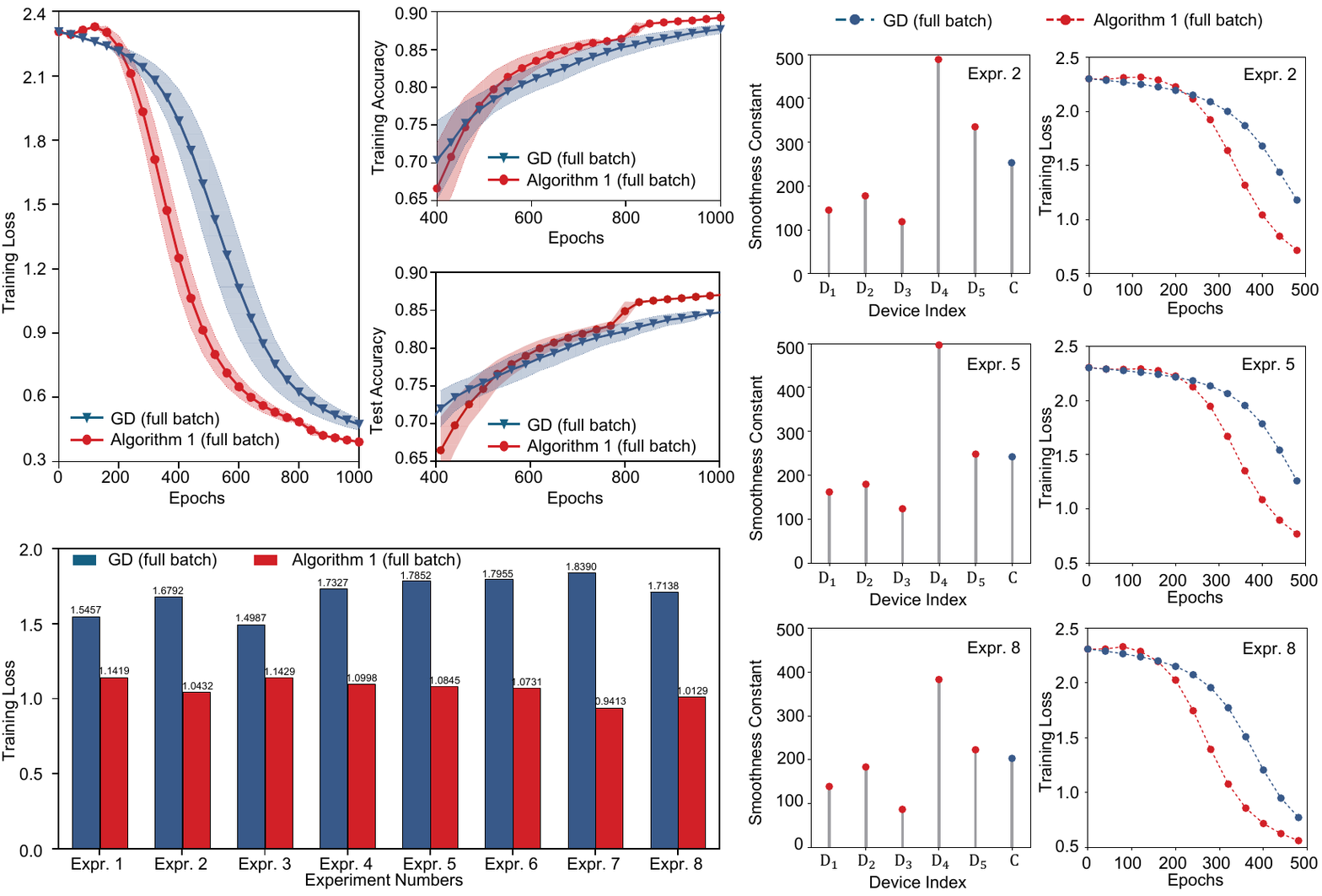}}
\vspace{0pt} 
\subfigure[]{\label{fig3b}
\includegraphics[width=7.2cm,height=3.5cm]{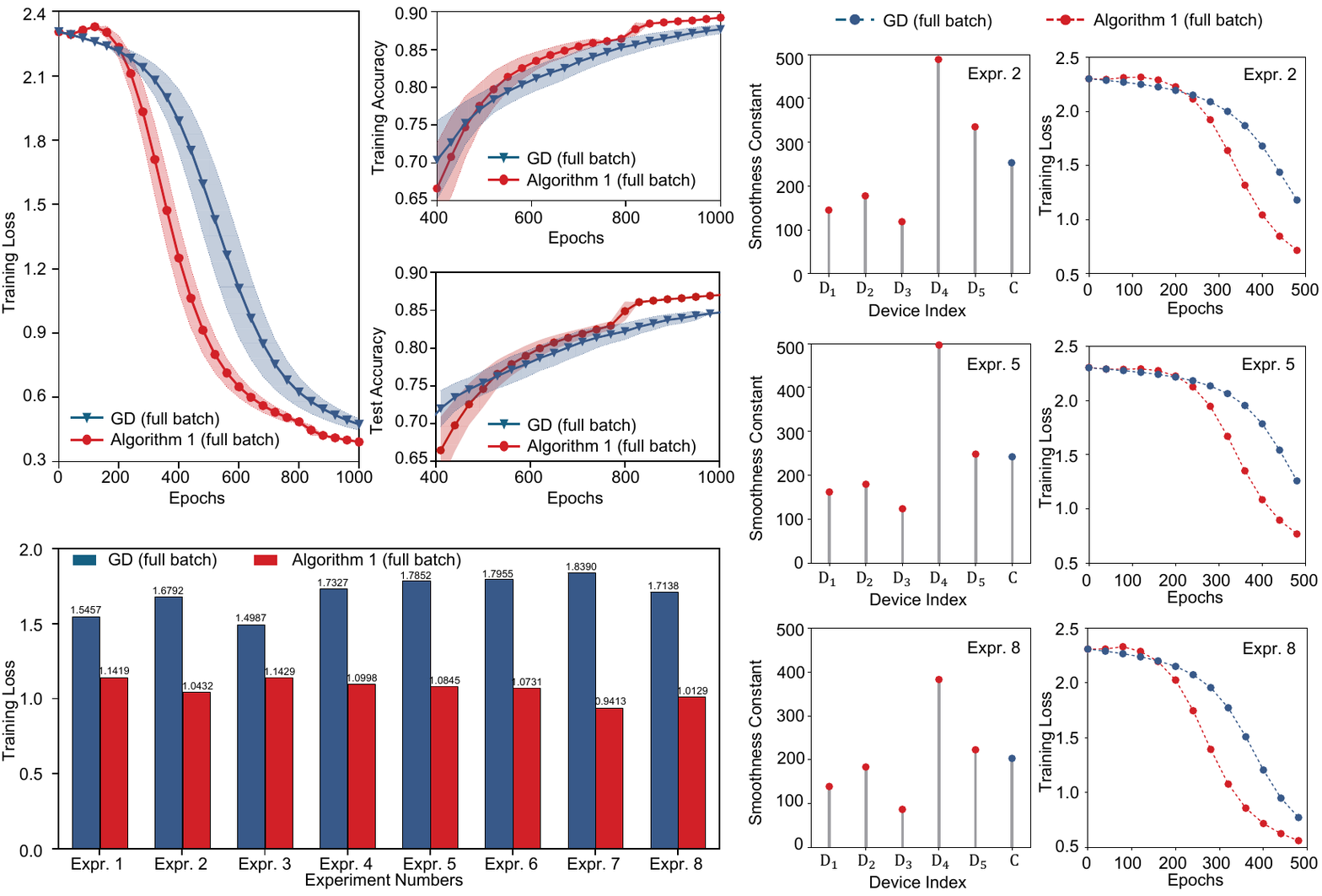}}
\end{minipage}
\!\!\!\!\!\!\begin{minipage}[b]{0.40\linewidth}
{\raisebox{0.06\height}{
\!\!\subfigure[]{\label{fig3c}
\includegraphics[width=6cm,height=9.5cm]{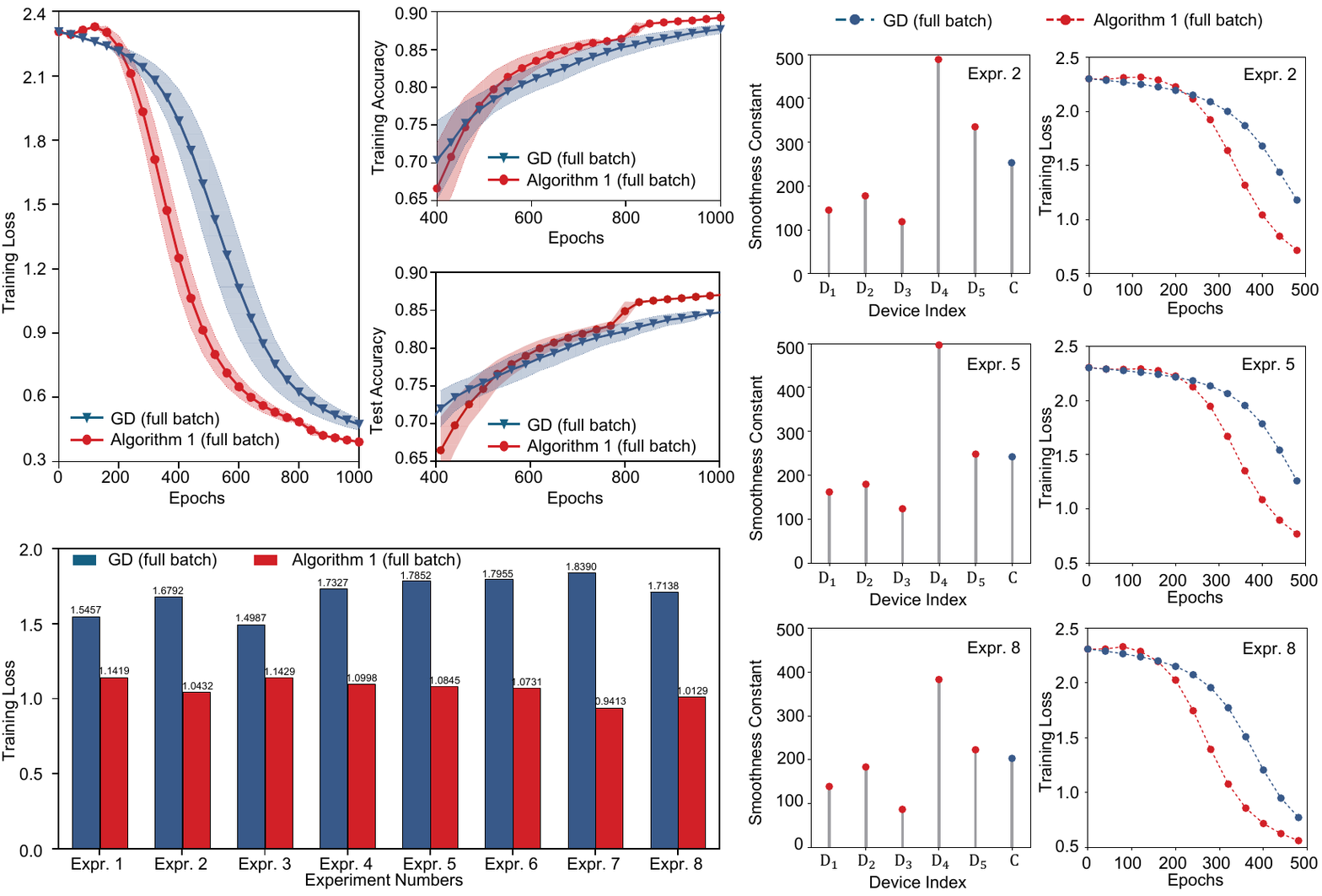}}
}}
\end{minipage}
\caption{Comparison of Algorithm~\ref{algorithm} with its centralized counterpart (labeled GD) using the MNIST dataset. In this experiment, both algorithms use full-batch gradient computation: the centralized approach processes the entire dataset, while in the decentralized approach, each device computes gradients using all the data allocated to it. (a) shows that the decentralized approach achieves faster convergence than the centralized counterpart in terms of training loss, training accuracy, and test accuracy. (b) compares the training loss over eight runs with different estimated smoothness constants and random initializations. (c) illustrates the estimated smoothness constants alongside the training loss trajectories of three selected runs from the eight. On the x-axis representing the device index, the label ``C'' denotes the centralized case, and ``$\rm D_1$'' through ``$\rm D_5$'' represent the devices in the decentralized case.}
\label{fig:mnist}
\end{center}
\end{figure}

\subsection{Experimental results on various heterogeneous data distributions}
We have conducted additional experiments to evaluate the learning performance of our Algorithm~\ref{algorithm} under different levels of data heterogeneity using the MNIST dataset. More specifically, the training dataset consists of $10$ classes. For each class, we partitioned the data across $5$ agents according to a Dirichlet distribution with parameters $ \alpha\in\{0.1,0.5,1,5\}$. Note that the parameter $\alpha$ measures the heterogeneity in data distribution among all agents. A smaller $\alpha$ leads to a higher degree of data heterogeneity. Both Algorithm~\ref{algorithm} and its centralized counterpart (called GD) used full-batch gradient computation.

\begin{figure}[H]
\begin{center}
\subfigure[$\alpha=0.1$]{\label{fig5a}
\includegraphics[width=0.23\linewidth]{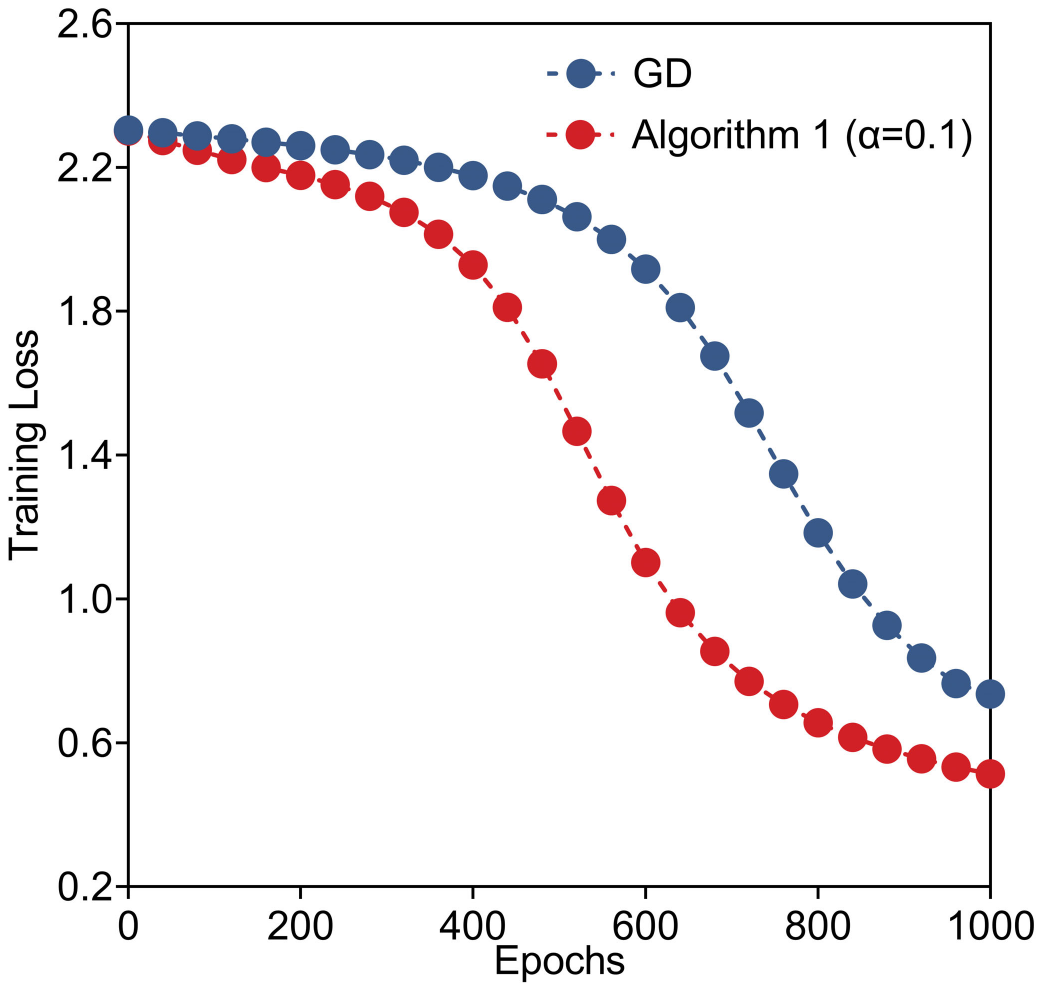}}
\subfigure[$\alpha=0.5$]{\label{fig5b}
\includegraphics[width=0.23\linewidth]{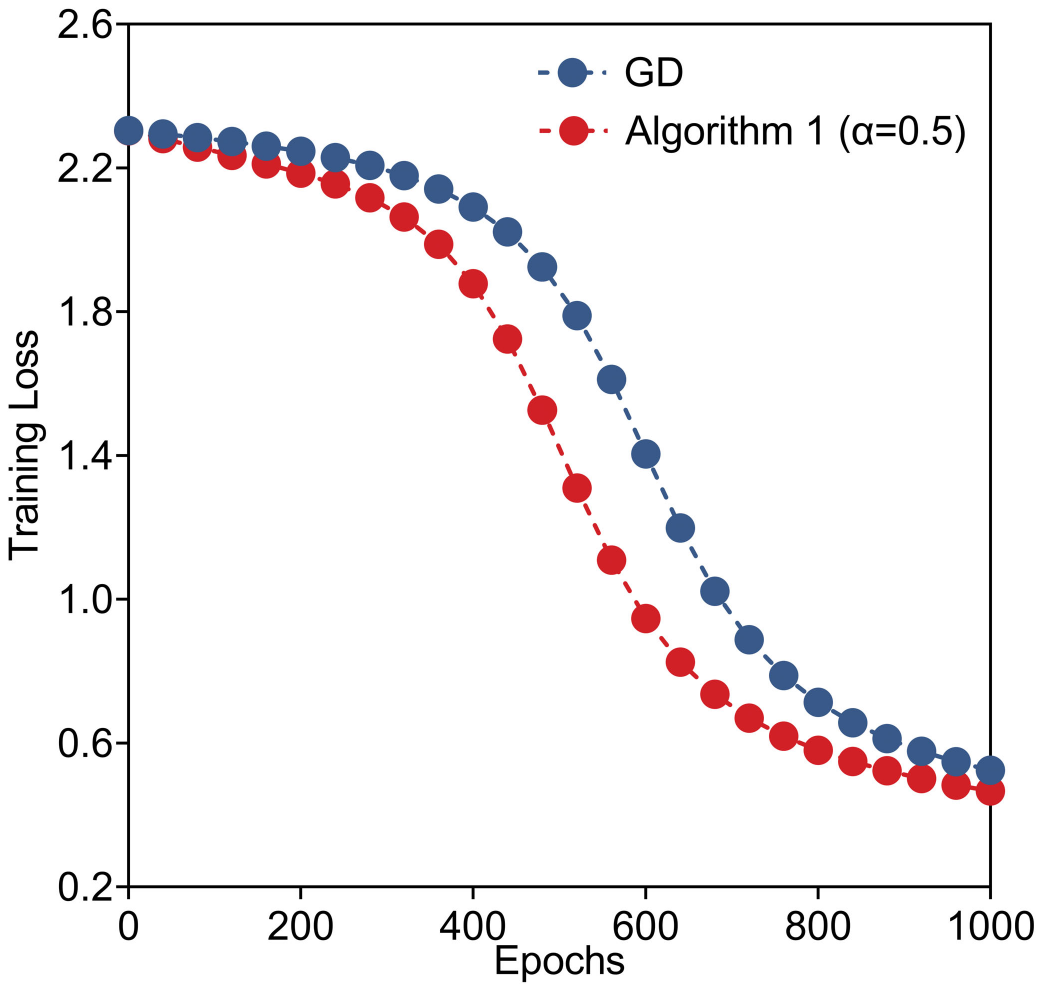}}
\subfigure[$\alpha=1$]{\label{fig5c}
\includegraphics[width=0.23\linewidth]{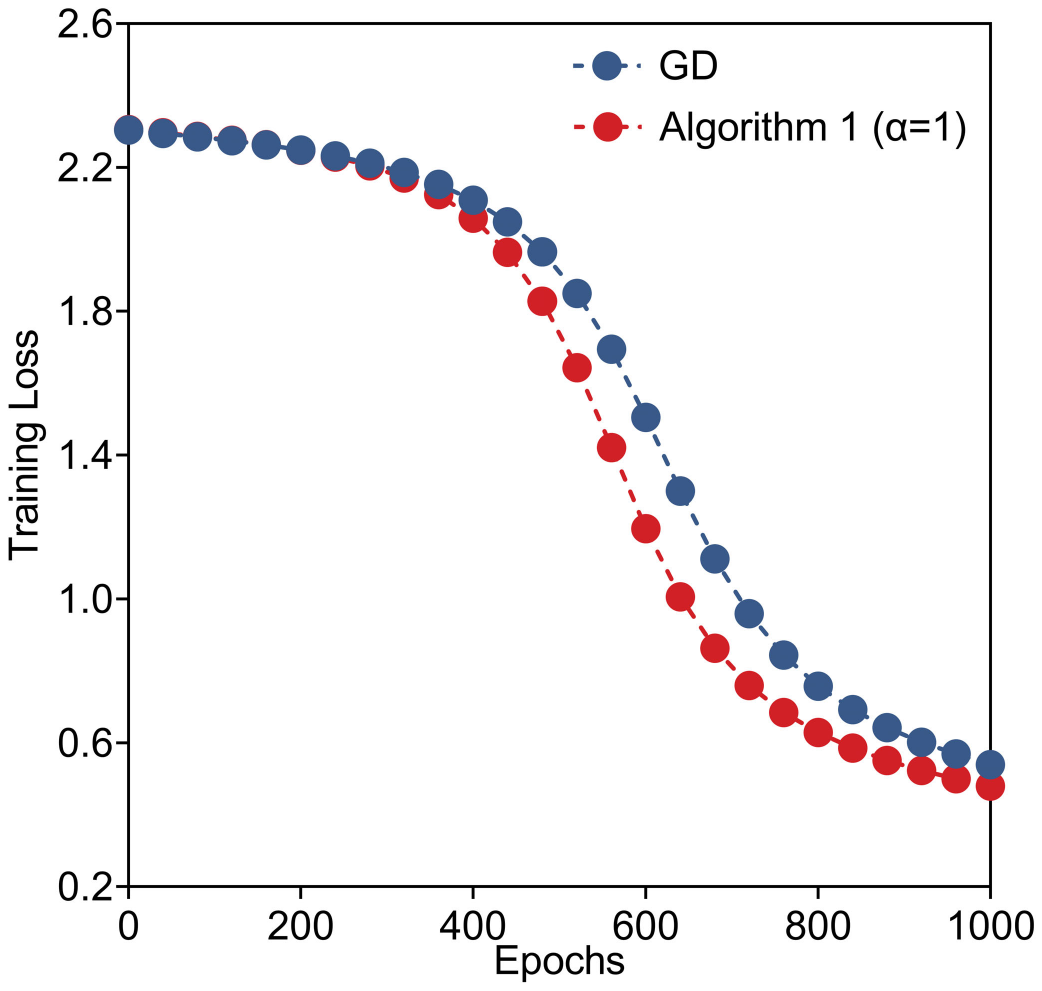}}
\subfigure[$\alpha=5$]{\label{fig5d}
\includegraphics[width=0.23\linewidth]{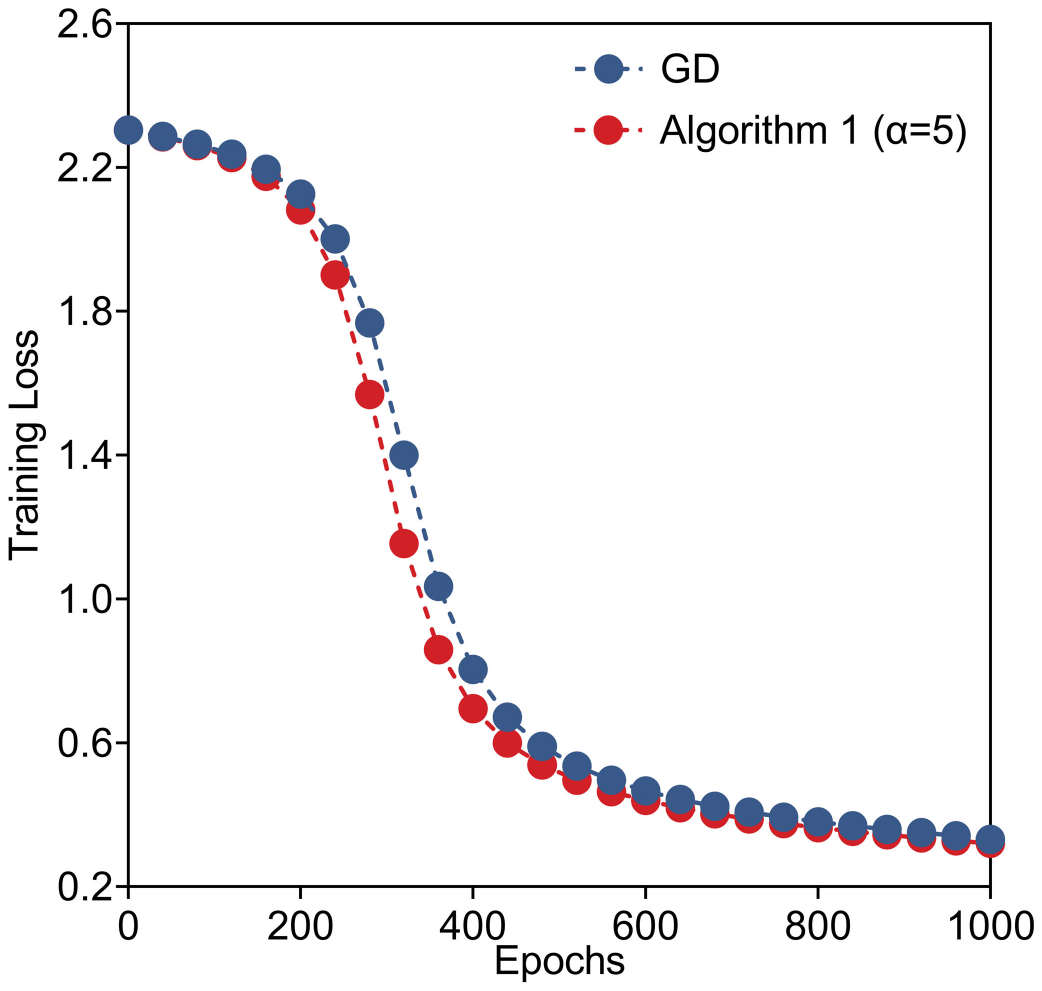}}
\caption{Training losses of Algorithm~\ref{algorithm} and its centralized counterpart (labeled GD) for heterogeneous data distributions under different Dirichlet parameters $\alpha \in \{0.1, 0.5, 1, 5\}$ on the MNIST dataset.}
\vspace{1ex} 
\label{fig5}
\end{center}
\vspace{-1.5em}
\end{figure}

Fig.~\ref{fig5} shows that a smaller Dirichlet parameter $\alpha$ (corresponding to a higher degree of heterogeneity in data distributions among agents) leads to more acceleration via decentralization.

\subsection{Experimental results on various network sizes}
We have also conducted additional experiments to evaluate the efficacy of our Algorithm~\ref{algorithm} under different network sizes (i.e., different number of devices). Specifically, we considered $N=2,5,15,25$ devices, respectively. The data are partitioned across agents according to a Dirichlet distribution with parameter $\alpha = 0.1$. All other parameters are the same as those in Section~\ref{mni}.

\begin{figure}[H]
\begin{center}
\setcounter{subfigure}{0} 
\subfigure[{\color{blue}$N=2$}]{\label{fig6a}
\includegraphics[width=0.23\linewidth]{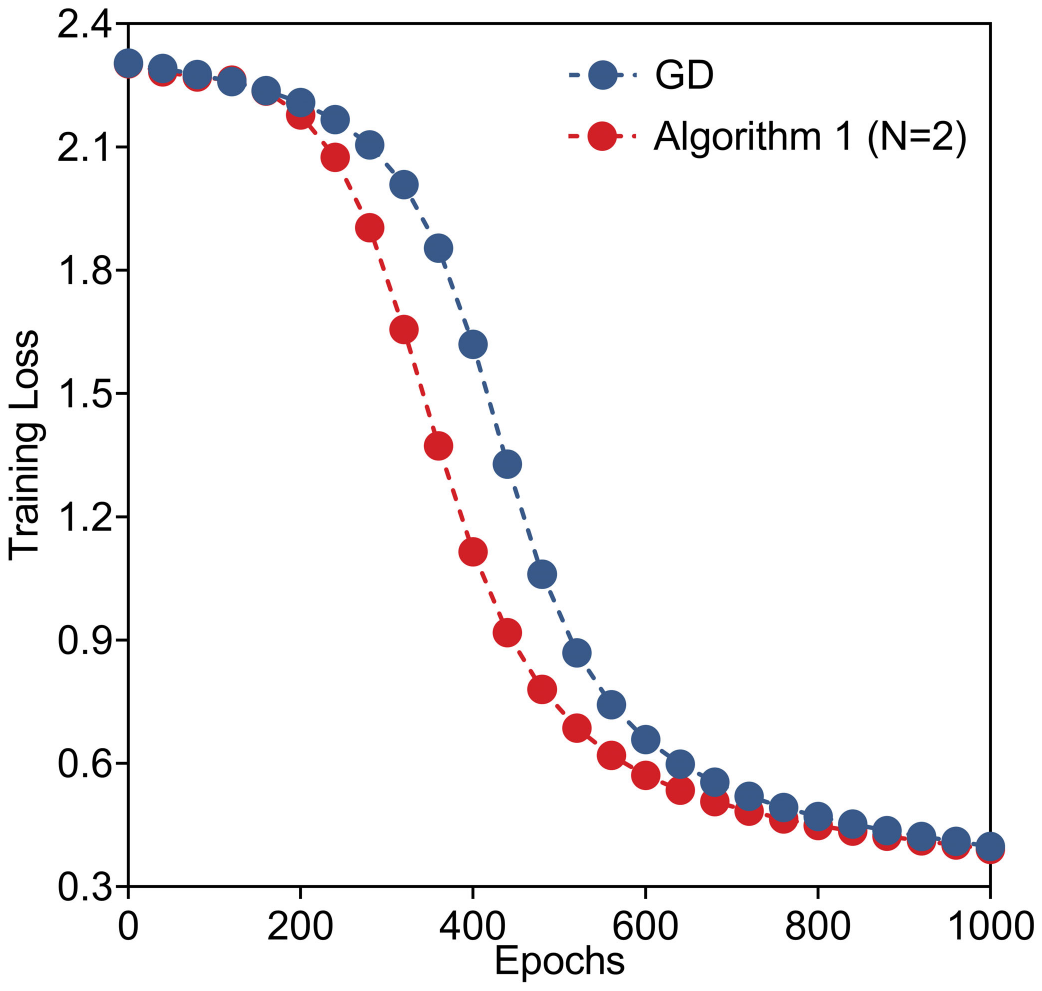}}
\subfigure[{\color{blue}$N=5$}]{\label{fig6b}
\includegraphics[width=0.23\linewidth]{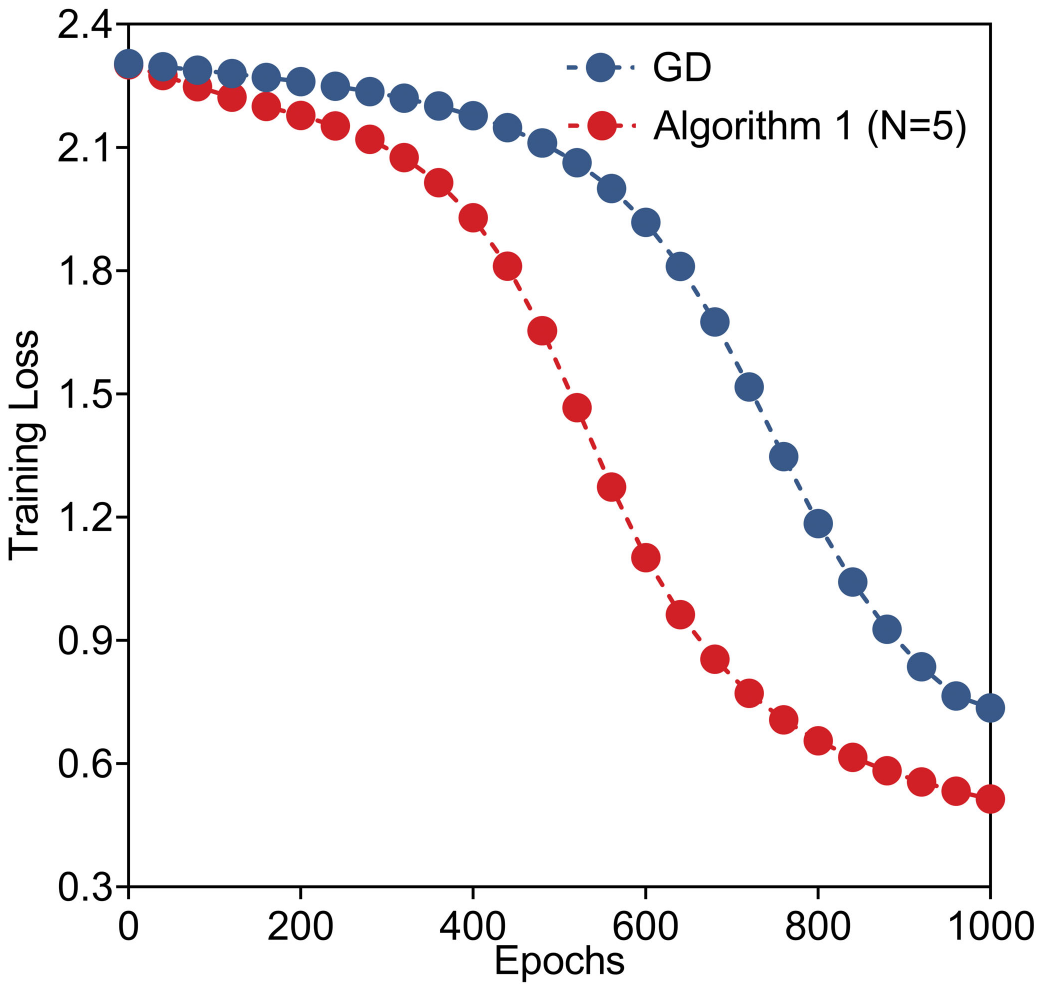}}
\subfigure[{\color{blue}$N=15$}]{\label{fig6c}
\includegraphics[width=0.23\linewidth]{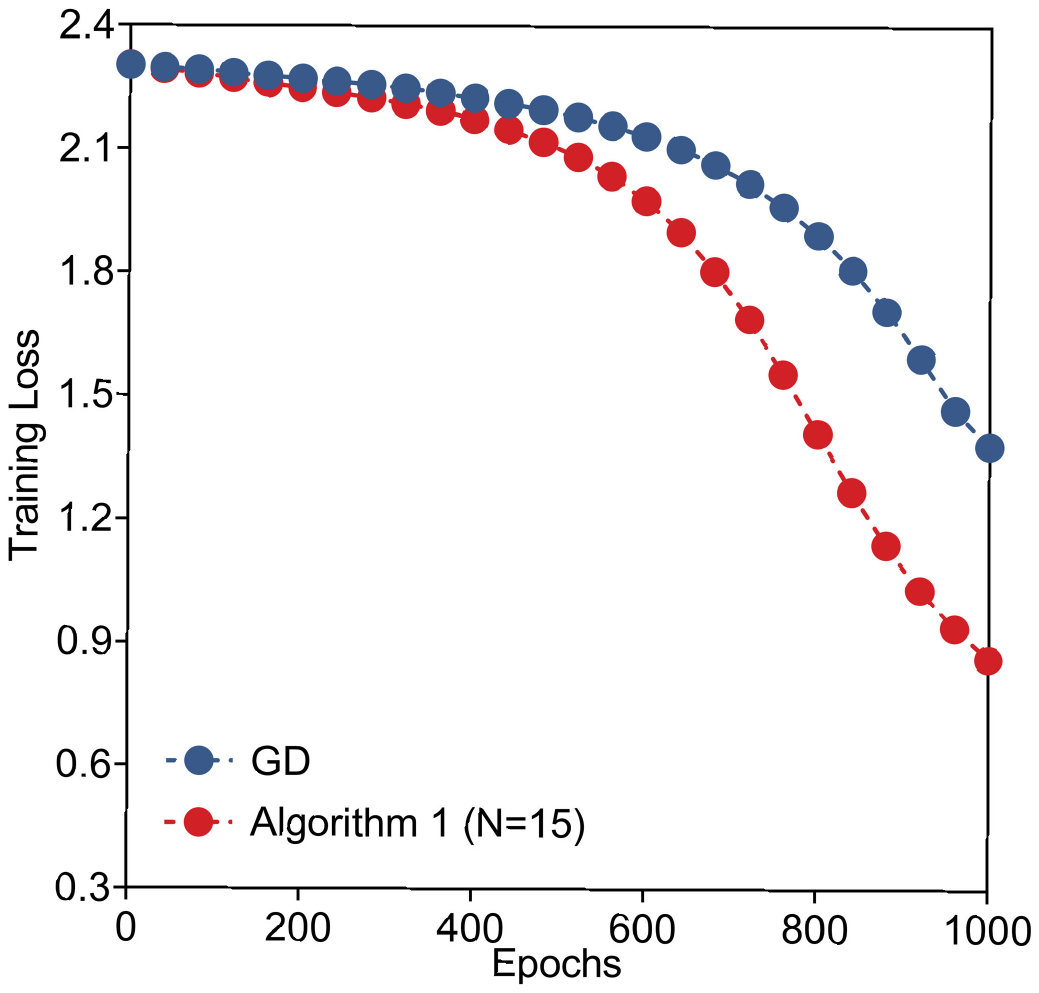}}
\subfigure[{\color{blue}$N=25$}]{\label{fig6d}
\includegraphics[width=0.23\linewidth]{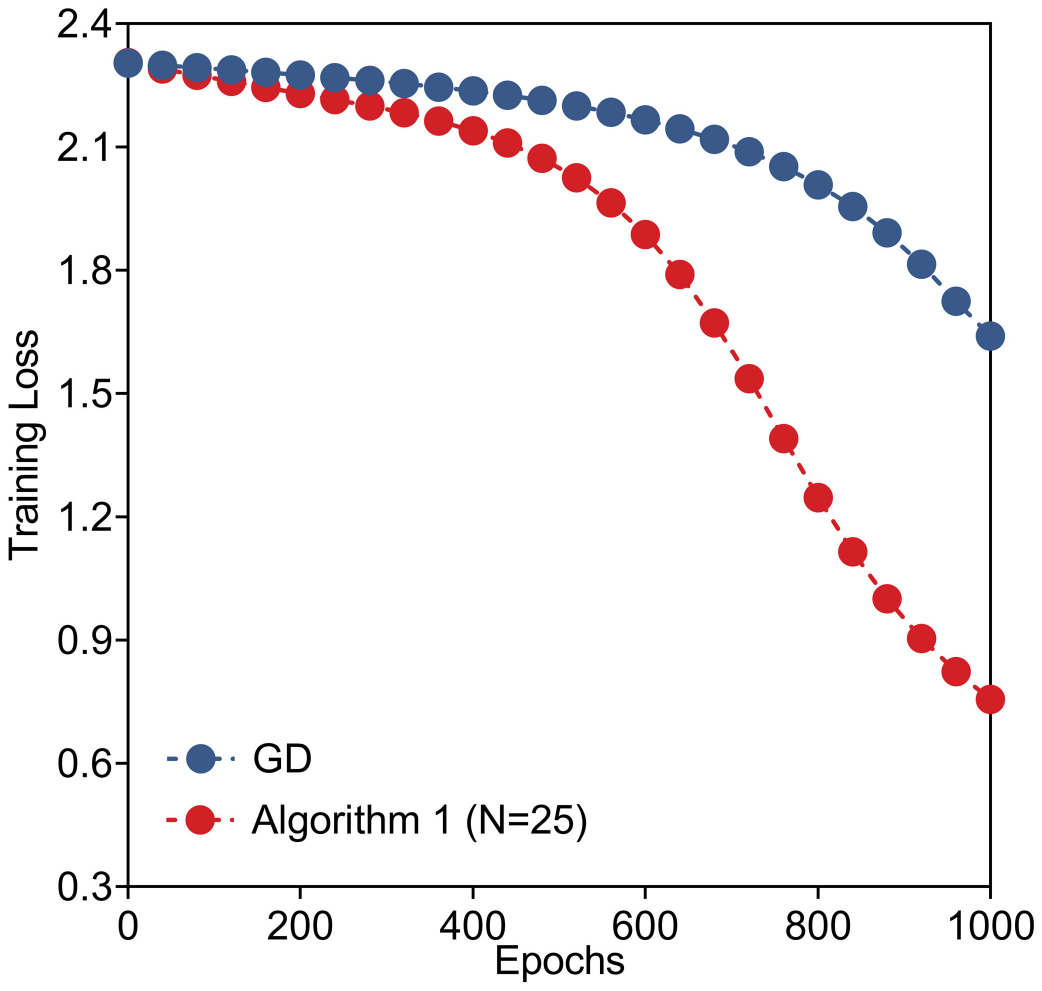}}
\caption{Training losses of Algorithm~\ref{algorithm} and its centralized counterpart (labeled GD) for different network sizes $N, N\in \{2,5,15,25\}$ on the MNIST dataset.}
\vspace{1ex} 
\label{fig6}
\end{center}
\vspace{-1.5em}
\end{figure}

The results in Fig.~\ref{fig6} show that increasing the number of devices strengthens the performance advantage of our Algorithm~\ref{algorithm} over its centralized counterpart.

\subsection{Comparison between Algorithm~\ref{algorithm} with Decentralized GD and gradient tracking}
To further evaluate the performance of Algorithm~\ref{algorithm}, we conducted additional experiments comparing it with decentralized gradient descent (labeled decentralized GD) from~\cite{lian2017can} and gradient tracking from~\cite{GT} on the MNIST dataset. The data were partitioned across $5$ devices according to a Dirichlet distribution with parameter $\alpha=0.1$. For each device, we estimated the smoothness constant using $1,000$ data samples, based on its formal definition. The step sizes for decentralized GD and gradient tracking were set to the reciprocal of the average estimated smoothness constants. All algorithms used full-batch gradient computation.

\begin{figure}[H]
\begin{center}
\setcounter{subfigure}{0} 
\subfigure[Training Loss]{\label{fig7a}
\includegraphics[width=4cm, height=4cm]{ 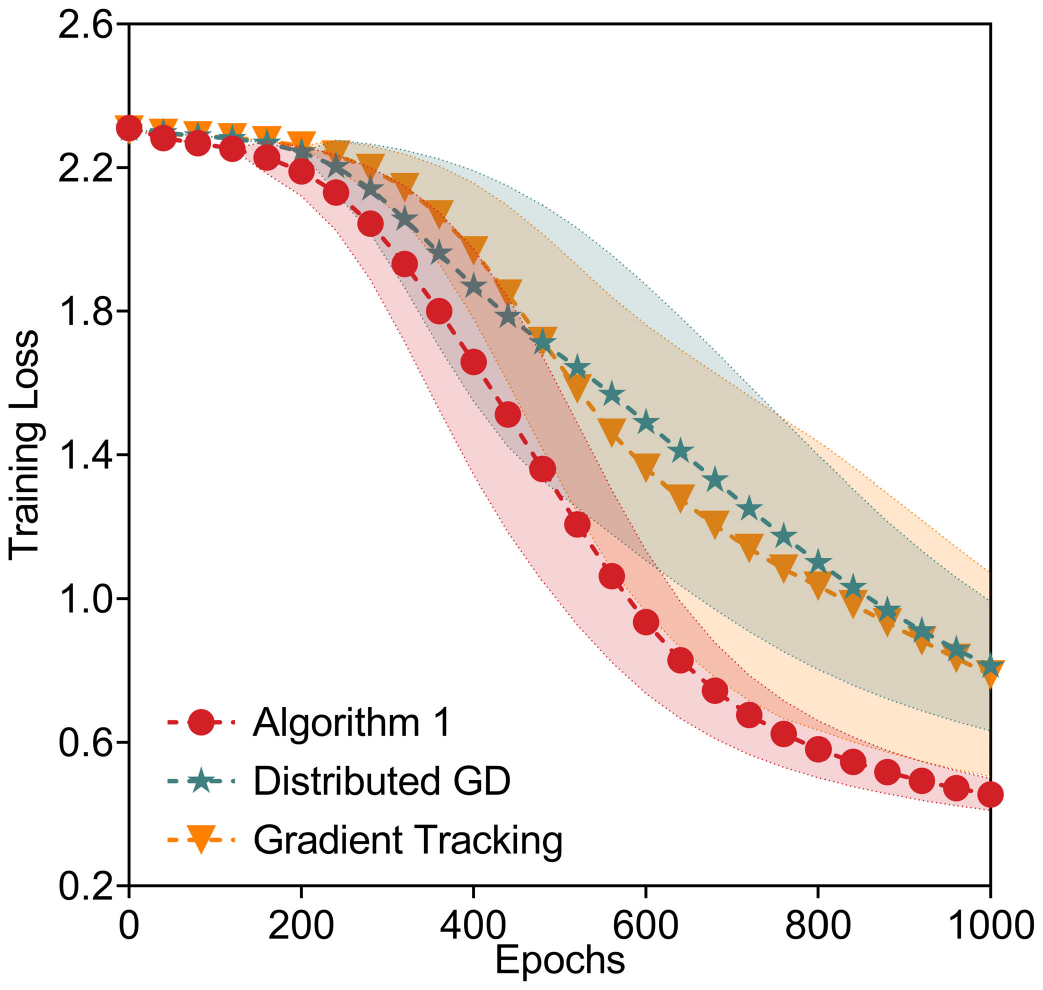}}
\subfigure[Training Accuracy]{\label{fig7b}
\quad\quad\includegraphics[width=4cm, height=4cm]{ 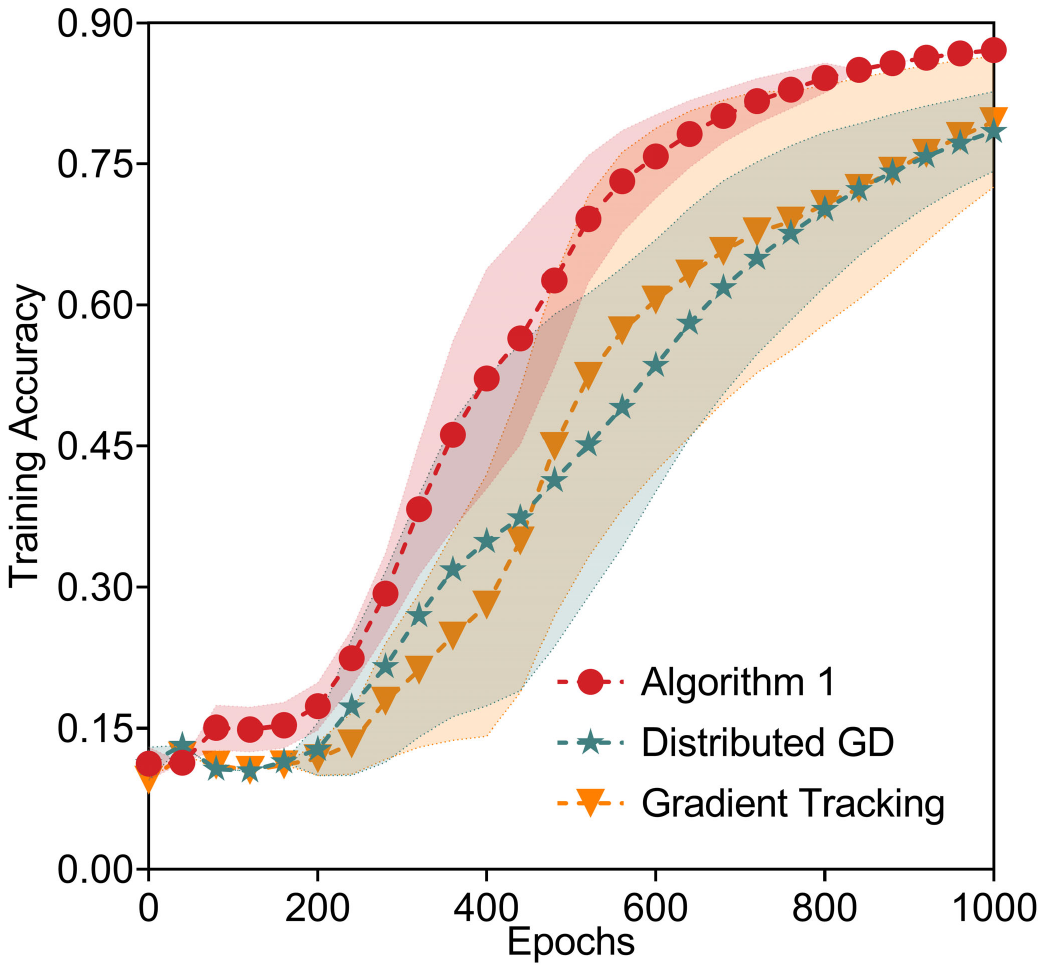}}
\subfigure[Test Accuracy]{\label{fig7c}
\quad\quad\includegraphics[width=4cm, height=4cm]{ 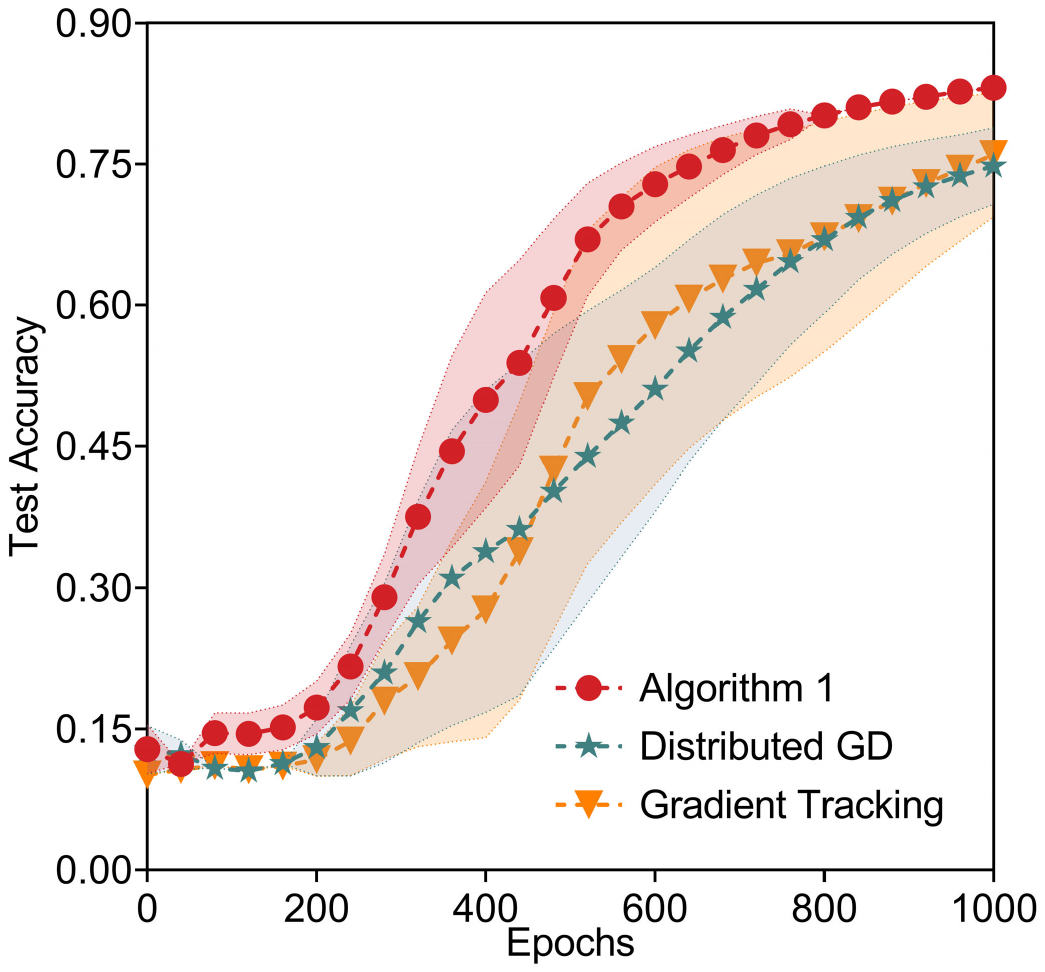}}
\caption{Comparison of Algorithm~\ref{algorithm} with decentralized GD in~\cite{lian2017can} and gradient tracking in~\cite{GT} using the MNIST dataset. In this experiment, all algorithms use full-batch gradient computation, meaning that each device computes gradients using all the data allocated to it.}
\vspace{1ex}
\label{fig7}
\end{center}
\vspace{-1.5em}
\end{figure}

Fig.~\ref{fig7} shows that Algorithm 1 achieves faster convergence than both the decentralized GD and gradient tracking in terms of training loss, training accuracy, and test accuracy.
\vspace{3mm}

\subsection{Comparison of Algorithm~\ref{algorithm} with FedOpt Variants and FedAvgM} 
To further evaluate the performance of Algorithm~\ref{algorithm}, we conducted additional experiments comparing it with FedOpt variants, including FedAdam, FedYogi, and FedAdagrad from~\cite{fedopt}, as well as FedAvgM from~\cite{fedavgm}, on the MNIST dataset. The data were partitioned across \(5\) devices, with each device containing samples from only two classes, corresponding to a highly heterogeneous (non-IID) setting. For each device, we estimated the smoothness constant using \(1{,}000\) data samples based on its formal definition.   All algorithms employed full-batch gradient computation.

\begin{figure}[H]
\begin{center}
\setcounter{subfigure}{0} 
\subfigure[Training Loss]{\label{fig8a}
\includegraphics[width=4cm, height=4cm]{ 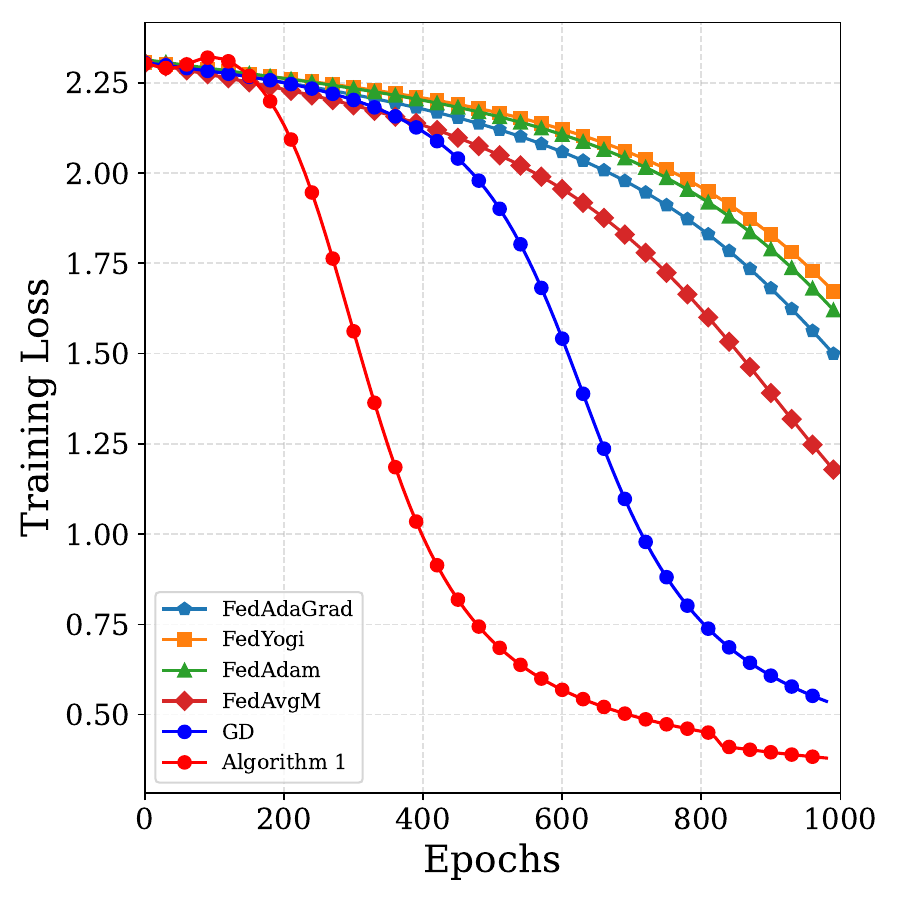}}
\subfigure[Training Accuracy]{\label{fig8b}
\quad\quad\includegraphics[width=4cm, height=4cm]{ 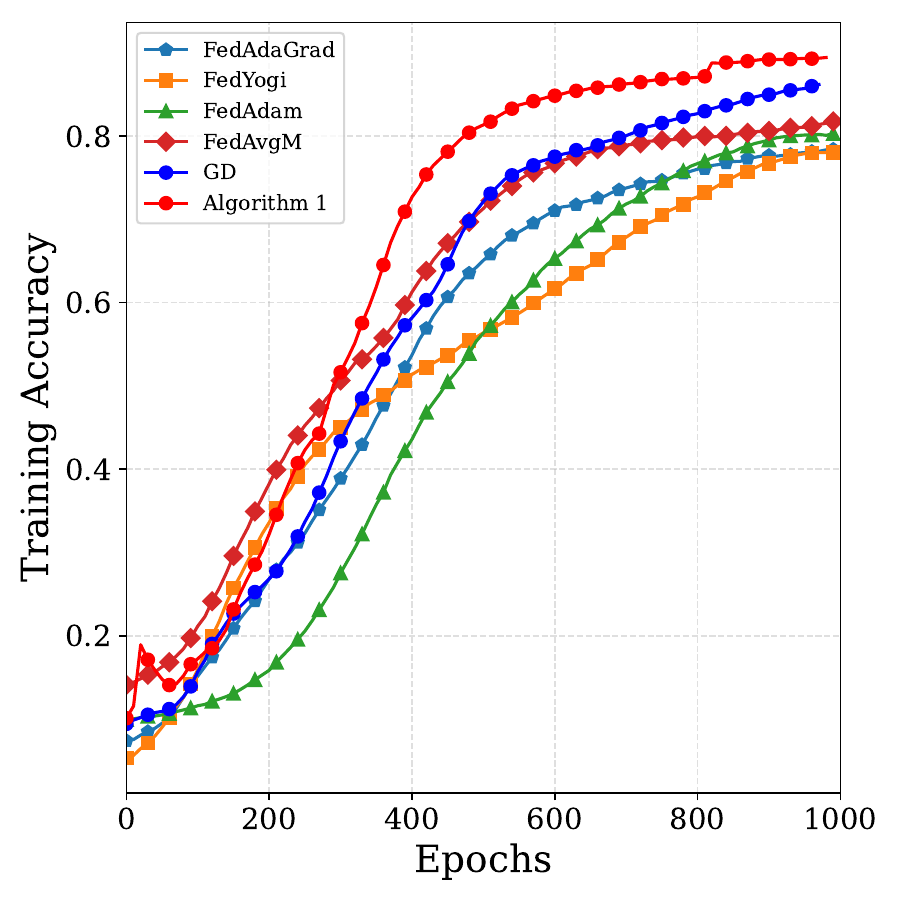}}
\subfigure[Test Accuracy]{\label{fig8c}
\quad\quad\includegraphics[width=4cm, height=4cm]{ 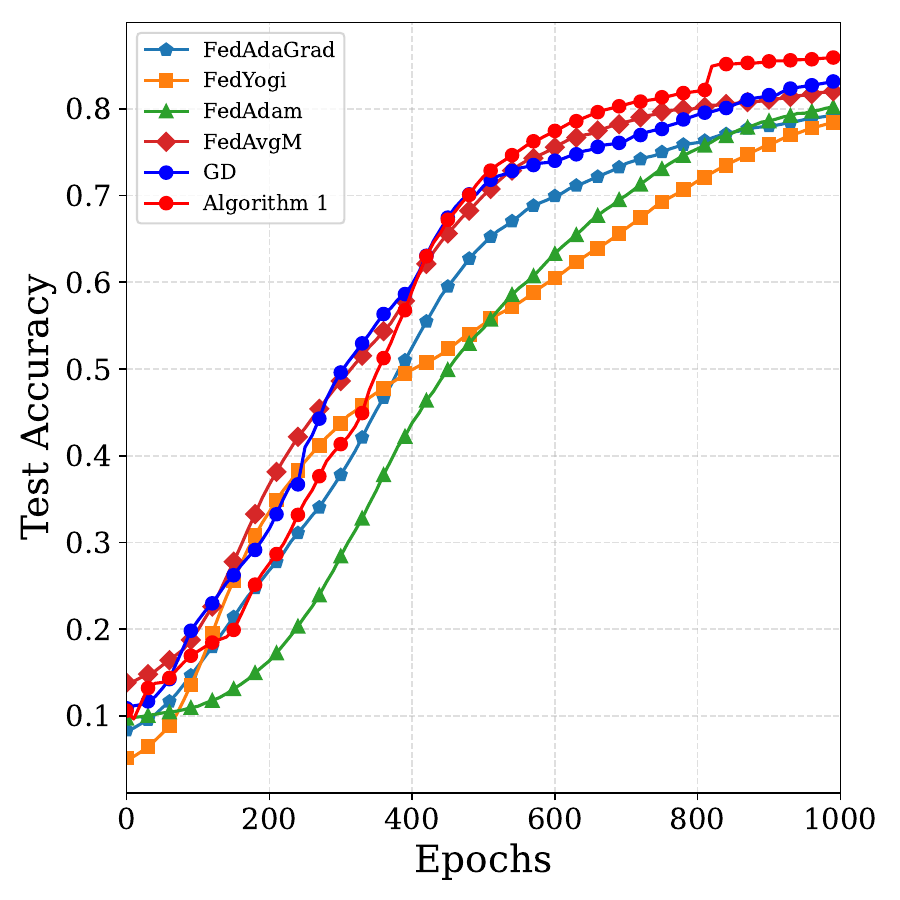}}
\caption{Comparison of Algorithm~\ref{algorithm} with  FedOpt variants including FedAdam, FedYogi, and FedAdagrad from~\cite{fedopt}, as well as FedAvgM from~\cite{fedavgm}. In this experiment, all algorithms use full-batch gradient computation, meaning that each device computes gradients using all the data allocated to it.}
\vspace{1ex}
\label{fig_opt}
\end{center}
\vspace{-1.5em}
\end{figure}

Fig.~\ref{fig_opt} shows that Algorithm~\ref{algorithm} achieves faster convergence than   FedOpt-based methods, including FedAdam, FedYogi, FedAdagrad, and FedAvgM, in terms of training loss, training accuracy, and test accuracy.

\subsection{Ablation study of Algorithm~\ref{algorithm} with partial agent participation}
 We evaluate the proposed Algorithm~\ref{algorithm} under a partial participation scheme, where at each communication round, some agents may be unable to participate due to failures in their local functions or communication delays. Specifically, at each communication round, each agent participates with probability $p\in\{0.7,0.8,0.9,1.0\}$, with $p=1.0$ denoting participation in every round. The results shown in Fig.~\ref{fig_partial} demonstrate the robustness of our algorithm under partial participation in a 20-agent setting, where each agent possesses data from only two classes. 
    
\begin{figure}[H]
\begin{center}
\setcounter{subfigure}{0} 
\subfigure[Training Loss]{\label{fig9a}
\includegraphics[width=4cm, height=4cm]{ 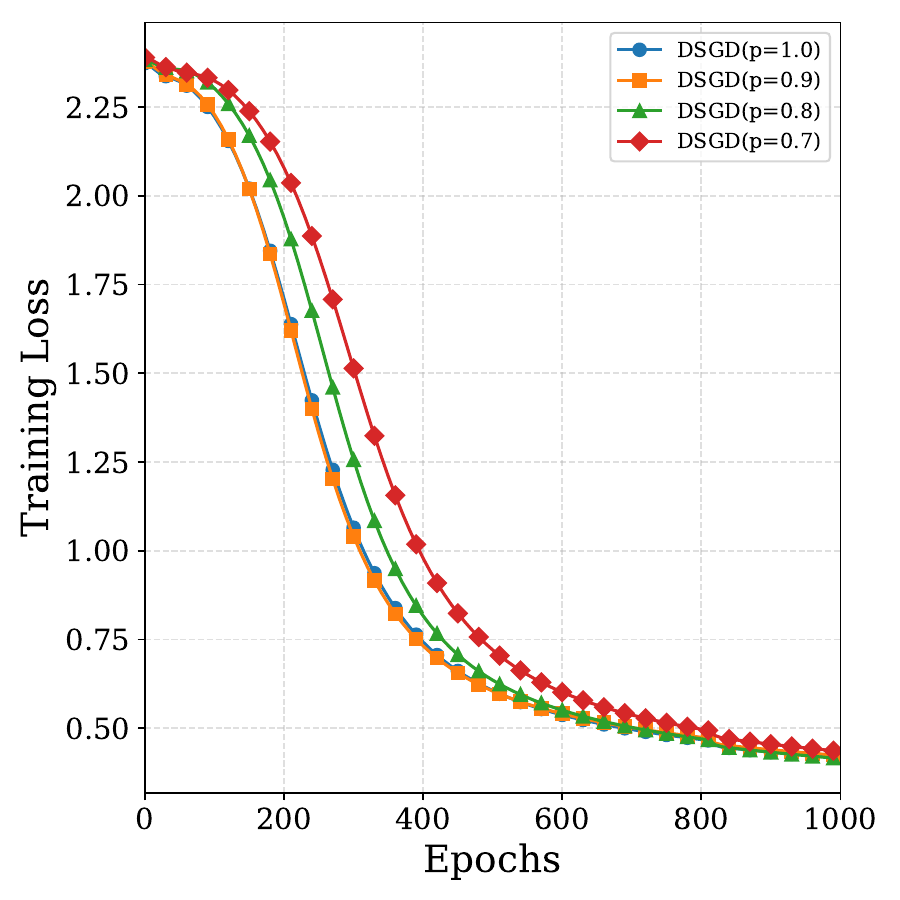}}
\quad\quad\subfigure[Training Accuracy]{\label{fig9b}
\includegraphics[width=4cm, height=4cm]{ 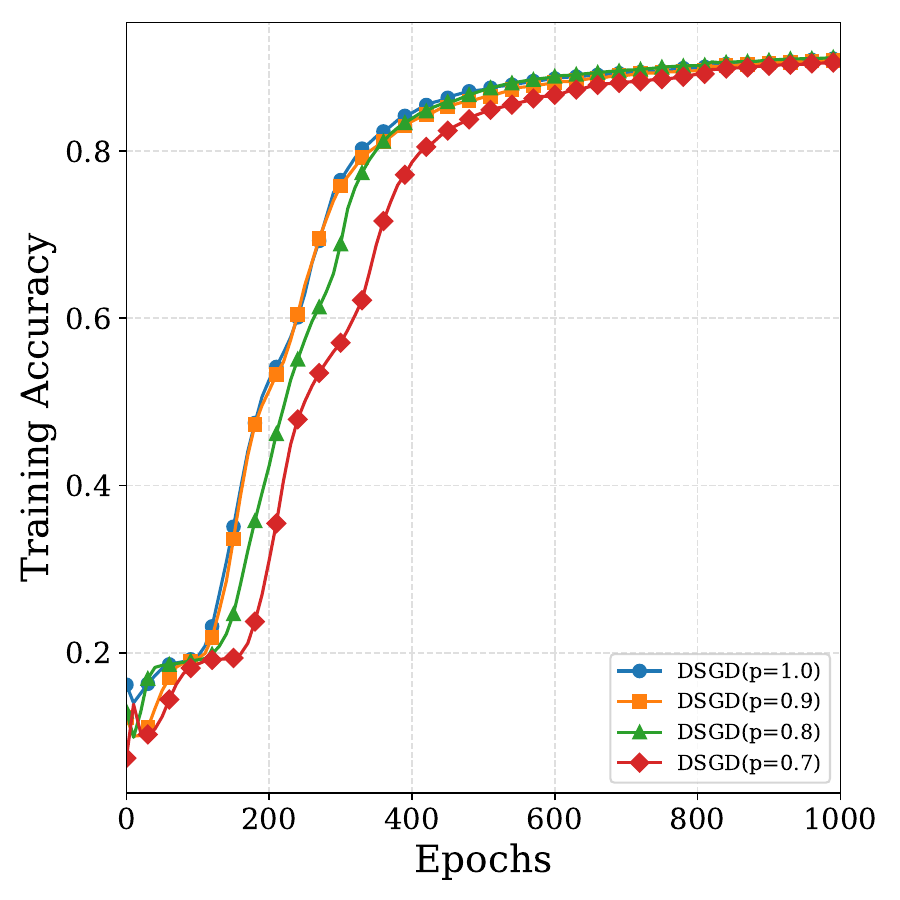}}
\quad\quad\subfigure[Test Accuracy]{\label{fig9c}
\includegraphics[width=4cm, height=4cm]{ 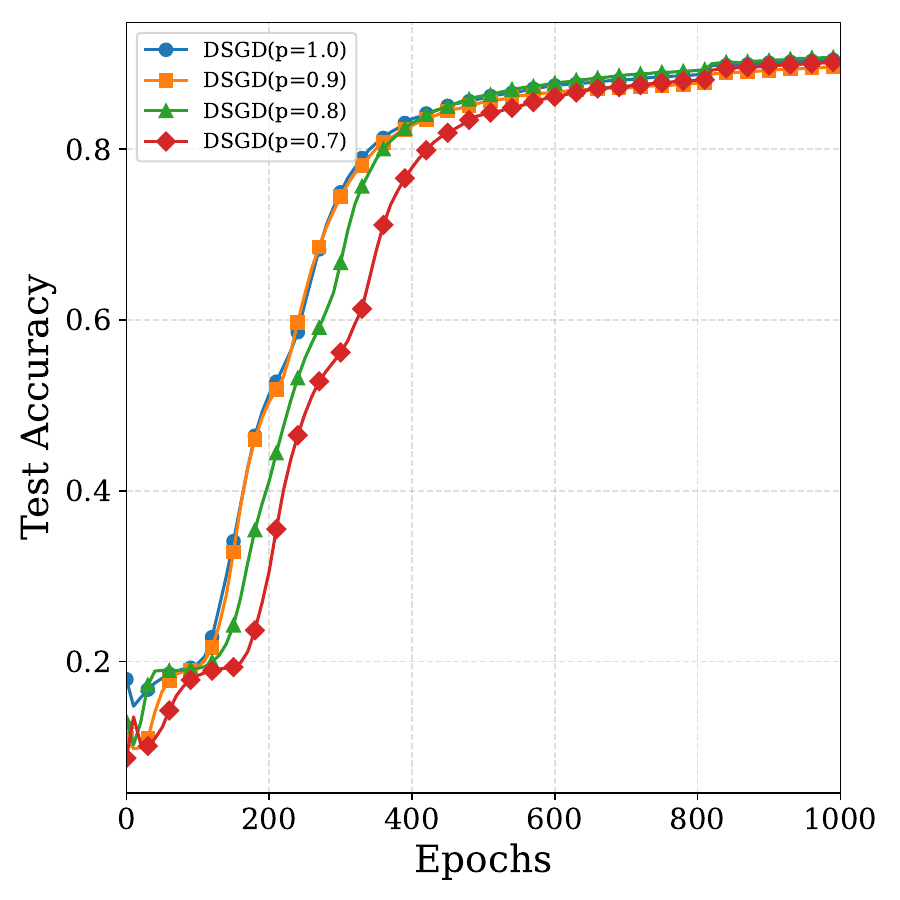}}
\caption{Algorithm~\ref{algorithm} with each agent having a participation probability   of $p\in\{0.7,0.8,0.9,1.0\}$ at each communication round   on the MNIST dataset. In this experiment, all algorithms employ full-batch gradient computation, meaning that each device computes gradients using all locally allocated data. 
}
\label{fig_partial}
\end{center}
\end{figure}

The results demonstrate that the proposed Algorithm~\ref{algorithm} remains robust under partial agent participation.


\end{document}